\def\eqref#1{equation~\ref{#1}}
\def\1{\bm{1}}
\DeclareMathAlphabet{\mathsfit}{\encodingdefault}{\sfdefault}{m}{sl}
\SetMathAlphabet{\mathsfit}{bold}{\encodingdefault}{\sfdefault}{bx}{n}
\newtheorem{theorem}{Theorem}
\newtheorem{proposition}[theorem]{Proposition}
\theoremstyle{remark}
\newcommand{\drj}[1]{\textcolor{red}{#1}}
\title{Gradient Inversion Attack on Graph Neural Networks}
\author{\name Divya Anand Sinha \email dasinha@uci.edu \\
 \addr University of California, Irvine
\AND
\name Ruijie Du \email ruijied@uci.edu \\
\addr University of California, Irvine
\AND
\name Yezi Liu \email yezil3@uci.edu \\
\addr University of California, Irvine
\AND
\name Athina Markopolou \email athina@uci.edu\\
\addr University of California, Irvine 
\AND
\name Yanning Shen\thanks{Corresponding Author.} \email yannings@uci.edu\\
\addr University of California, Irvine }
\begin{document}

\maketitle

\begin{abstract}
Graph federated learning is of essential importance for training over large graph datasets while protecting data privacy, where each client stores a subset of local graph data, while the server collects the local gradients and broadcasts only the aggregated gradients. Recent studies reveal that a malicious attacker can steal private image data from the gradient exchange of neural networks during federated learning. However, the vulnerability of graph data and graph neural networks under such attacks, i.e., reconstructing both node features and graph structure from gradients, remains largely underexplored. To answer this question, this paper studies the problem of whether private data can be reconstructed from leaked gradients in both node classification and graph classification tasks and proposes a novel attack named Graph Leakage from Gradients (GLG). Two widely used GNN frameworks are analyzed, namely GCN and GraphSAGE. The effects of different model settings on reconstruction are extensively discussed. Theoretical analysis and empirical validation demonstrate that, by leveraging the unique properties of graph data and GNNs, GLG achieves more accurate reconstruction of both nodal features and graph structure from gradients.

\end{abstract}

\section{Introduction}
\noindent Federated Learning \citep{fed} is a distributed learning paradigm that has gained increasing attention. 
Gradient averaging is a widely used mechanism in federated learning, where
clients send the gradients of the models instead of the actual private data to a central server. The server then updates the model by taking the average gradient over all the clients.
The computation is executed independently on each client and synchronized via exchanging gradients between the server \citep{li2014scaling,iandola2016firecaffe} and the clients \citep{patarasuk2009ring_all_reduce}.
Various organizations and mobile devices in healthcare rely on federated learning to train models while also keeping user data private.
For example, multiple hospitals train a model jointly without sharing their patients' medical records \citep{jochems2017developing,mcmahan2016communication}.

Recently, several efforts have demonstrated that gradients, accessible to an honest-but-curious (HBC) server, can be exploited via gradient inversion attacks to reconstruct the original input data. By leveraging the gradients, \citet{jeon2021gradient, dlg,inverting,zhao2020idlg,yin2021,zhu2020r, jin2021cafe} successfully reconstruct the users' private image data and \citet{boenisch2021curious,fowl2022decepticons} reconstruct users' text data. Gradient inversion attacks have demonstrated the privacy vulnerabilities of standard federated learning. 

\begin{figure}[t]
	\centering

   {\includegraphics[scale=0.36]{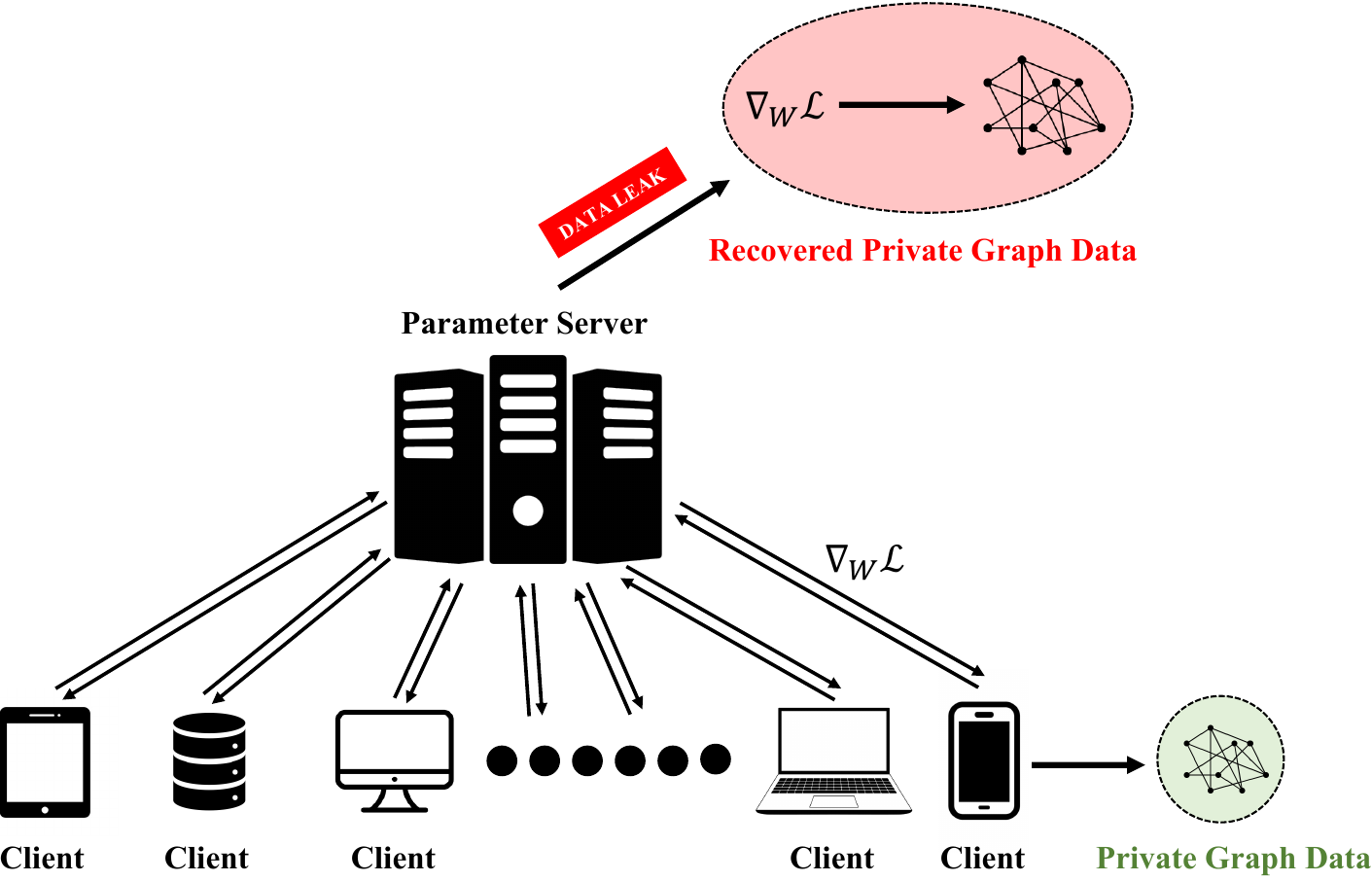}

	}
    \caption{{Gradient inversion attack in Federated Graph Learning}}

	\label{attack}
\end{figure}
  
Meanwhile, Graph Neural Networks (GNNs) have demonstrated state-of-the-art results when applied to structured data. From learning meaningful node representations over large social networks \citep{graphsage}, to predicting molecular property \citep{zinc,yang}, GNNs have been widely adopted across various paradigms.  The confluence of Federated Learning and graphs has led to the emergence of Federated Graph Learning (FGL) \citep{fedgnn1,fedgnn3,fedgnn2}. 

However, the privacy concerns regarding gradient leakage in FGL remain crucial and underexplored. Prior work has explored gradient-based attacks and privacy techniques in federated graph learning \citep{geisler2021robustness,ge2024review}, which aims to mislead training. Meanwhile, gradient inversion attacks have shown notable success in general deep neural networks \citep{dlg,idlg}.  
Nevertheless, graph-specific gradient-inversion attacks, i.e., reconstructing both node features and graph structure from gradients, remain largely underexplored.
Hence, the present work aims to answer the following underexplored research question:
\begin{quote}
{How can the unique properties of graph data and GNNs be leveraged to effectively reconstruct graph structure and node features from gradients of GNNs?
}

\end{quote}


A major challenge in reconstructing graph data is the intertwined nature of the nodal features and the irregular graph structures, making it difficult to separate the information from nodal features and the graph structure. In other words, representations are correlated, and the underlying graph structure influences the correlation. In this work, we provide a theoretical analysis of the feasibility of reconstructing graph data based on gradients. Motivated by this observation, we propose a gradient inversion attack called GLG (Graph Leakage From Gradients). We investigate two commonly used GNN frameworks, i.e., Graph Convolutional Networks (GCN) \citep{gcn}, and GraphSAGE \citep{graphsage}, and examine their vulnerabilities to gradient inversion attacks for both node and graph classification tasks.
An illustration of the attack scenario is presented in Figure \ref{attack}, showcasing the process in which a curious central parameter server attempts to reconstruct the graph data utilizing the gradients of the model obtained from a client. \textbf{To our best knowledge, this is among the first works to study gradient-inversion attacks for graph data.}
Our main contributions are summarized as follows:
\begin{itemize}[itemsep=0pt,topsep=0pt,parsep=0pt]
    \item Theoretical analysis revealing the vulnerabilities of GNNs towards gradient inversion attack.
    \item Developing novel gradient inversion attack mechanisms to reconstruct graph data for both node and graph classification tasks.
    \item Experiments are conducted in a variety of settings with different amounts of prior knowledge.
\end{itemize}

The structure of the rest of the paper is as follows. Section \ref{relate_work} reviews related work. Section \ref{prelim} introduces the problem formulation and the federated graph learning framework. Section \ref{sec:GIA} presents the proposed gradient inversion attack algorithms, GLG, and Section \ref{theory} provides the corresponding theoretical analysis. Section \ref{experiments} presents the experimental results. Section \ref{sec:conclusion&future} concludes the paper and discusses future work.

\section{Related Work} \label{relate_work}

\textbf{Gradient inversion attacks in FL.} 
In terms of privacy leakage, communicating gradients throughout the training process in FL can reveal sensitive information about the participants. Deep leakage from gradients (DLG)\citep{dlg} demonstrated that sharing the gradients can leak private training data, including images and text.
Specifically, \citet{idlg} proposes that the ground truth labels can be directly inferred from the gradients. Consequently, ensuing works improve the attack techniques on large batches of user data \citep{yin2021,robbing}. A few recent works study reconstructing text data in federated learning. To facilitate the text reconstruction, \citet{boenisch2021curious,fowl2022decepticons}  reconstruct text data using a strong threat model in which the server is malicious and is able to manipulate the training model's weights. In contrast, \citet{gupta2022recovering} studies the feasibility of reconstructing text from large batch sizes by leveraging the memorization capability of Language Models during federated learning. 
Recent work \citet{dager} introduces DAGER, the first exact gradient inversion attack for transformers by leveraging the low-rankness of self-attention layer gradients.
The majority of the works are based on either image or text data.
\\
\textbf{Gradient inversion attacks in FGL.} 
Reconstructing the data, including nodal features and graph structure, in FGL remains underexplored compared to the gradient inversion attacks in image or text domains. Since the input of GNN includes nodal features and graph structure, directly applying DLG or its enhanced methods in the graph domain is either infeasible or leads to significant performance degradation. Particularly in GCN, DLG consistently struggles to reconstruct the graph structure, which is essential for graph data.
The only concurrent work, \citet{grain}, proposes GRAIN that introduces a gradient inversion attack tailored to GCN \citep{gcn} and GAT \citep{gat}. GRAIN employs a depth-first search strategy to identify subgraphs within gradients, iteratively filtering out unlikely candidates through a span check. By leveraging degree information in the features, GRAIN generates its final prediction.
However, nodal degree information is not inherently accessible to attackers during GNN training. 
In the absence of degree information, neither theoretical nor experimental guarantees of GRAIN can keep reliable. Given the limited exploration of gradient inversion attacks in FGL, we propose a method, GLG, that reconstructs both nodal features and graph structure with fewer constraints. GLG does not rely on explicit degree information, making it more generalizable. To facilitate comparison, we summarize the key differences between DLG, GRAIN, and our method in Table~\ref{tab:overview_dlg_grain_glg}. The detailed comparison is provided in Appendix \ref{sec:comparison}.

\begin{table}[ht]
    \centering
    \caption{Comparison of DLG, GRAIN and GLG. \Checkmark and \XSolidBrush indicate whether the method can reconstruct the corresponding graph data. Prior knowledge refers to any other information required besides the gradients.}
    \label{tab:overview_dlg_grain_glg}
    \begin{tabular}{l c c c c}
        \toprule
        Method & Nodal feature & Graph structure & Prior knowledge  \\
        \midrule
        DLG & \Checkmark & \XSolidBrush & None  \\
        GRAIN & \Checkmark & \Checkmark  & Node degree \\
        GLG & \Checkmark & \Checkmark & None \\
        \bottomrule
    \end{tabular}
\end{table}

\section{Preliminaries} \label{prelim}
Given a graph $\mathcal{G}:=(\mathcal{V},\mathcal{E})$, where $\mathcal{V}:=\{v_1,v_2,\cdots,v_N\}$ represents the node set and $\mathcal{E}\subseteq N \times N$ denotes the edge set. Matrices $\mathbf{X} \in \mathbb{R}^{N \times D}$ and $\mathbf{A} \in \{0,1\}^{N \times N}$ represent the node feature and adjacency matrix respectively, where $\mathbf{A}_{ij}=1$ if and only if $\{v_i,v_j\}\in \mathcal{E}$.
$\tilde{\mathbf{A}} \in \mathbb{R}^{N \times N}$ denotes the normalized adjacency matrix derived from $\mathbf{A}$. {In a GCN framework $\tilde{\mathbf{A}}=\mathbf{D}^{-\frac{1}{2}}(\mathbf{A}+\mathbf{I})\mathbf{D}^{-\frac{1}{2}}$ where $\mathbf{D} \in \mathbb{R}^{N \times N}$ is the diagonal degree matrix with $i$-th diagonal entry $d_i$ denoting the degree of node $v_i$. While for GraphSAGE with mean aggregation, $\tilde{\mathbf{A}}=\mathbf{D}^{-1}\mathbf{A}$}. 
Let $\mathbf{L} = \mathbf{I} - \mathbf{D}^{-\frac{1}{2}}\mathbf{A} \mathbf{D}^{-\frac{1}{2}}$ denote the symmetric normalized laplacian matrix.
{In a node classification task, the node to be classified is called the target node, denoted as ${v_n}$, with $\mathbf{x}_{v_n}$ and $\mathcal{N}_{v_n}$ representing its feature vector and neighboring index set, respectively. Node representations at the $l$-th layer are denoted by $\mathbf{H}^l \in \mathbb{R}^{N \times F}$ where the $i$-th row of $\mathbf{H}^l$ i.e. $\mathbf{h}^l_{v_i} \in \mathbb{R}^{1 \times F}$ denotes the representation for the node $v_i$. Throughout the paper we use the operator $(\cdot)_i$ to denote the $i$-th row of any matrix or the $i$-th element of any vector, for example, $(\mathbf{X})_i$ denotes the $i$-th row of the node feature matrix $\mathbf{X}$.}

{ \subsection{GNN Frameworks} \label{sec:message-passing}
We first introduce GraphSAGE and Graph Convolutional Network (GCN) frameworks for both node and graph classification tasks below.

\subsubsection{Node Classification:}
Consider a GraphSAGE layer with weights  $\mathbf{W}_1 \in \mathbb{R}^{F \times D }, \mathbf{W}_2 \in \mathbb{R}^{F \times D }$, and bias $\mathbf{b} \in \mathbb{R}^{1 \times F}$}
 Let $\mathbf{h}_{v_n}^l$ and $\mathbf{h}_{\text{agg}}^l$ denote the hidden  and aggregated representation of the node $v_n$ at layer $l$ and $\sigma$ is a non-linear activation function. Then the GraphSAGE layer can be expressed as 
\begin{align}
    \mathbf{h}_{v_n}^l &= \sigma(\mathbf{h}_{\text{agg}}^{l-1}\mathbf{W}_1^\top + \mathbf{h}_{v_n}^{l-1}\mathbf{W}_2^\top+ \mathbf{b}), \label{eqsagenode}
\end{align}
    \\
with $\mathbf{h}_{v_n}^0=\mathbf{x}_{v_n}$ and $\mathbf{h}_{\text{agg}}^{0}=\mathbf{x}_{v_n}^{\text{agg}}$ denoting the feature vector and aggregated representation of node $v_n$ respectively. For a GraphSAGE layer with mean aggregation the aggregated representation of a node $\mathbf{h}_{\text{agg}}^{l-1}$ can be obtained as
\begin{align}
     \mathbf{h}_{\text{agg}}^{l-1}=\text{mean}_{j \in \mathcal{N}_{v_n} }(\mathbf{h}_{v_j}^{l-1}) \label{agg_sage}.
\end{align}

{Similarly, for a GCN layer with weights $\mathbf{W} \in \mathbb{R}^{F \times D }$ and bias $\mathbf{b} \in \mathbb{R}^{1 \times F}$, 
the representation of a target node $v_n$ can be obtained via
\begin{align}
    \mathbf{h}_{v_n}^l &= \sigma(\mathbf{h}_{\text{agg}}^{l-1}\mathbf{W}^\top +\mathbf{b}) . \label{eqgcnnode}
\end{align}
{ Here $\mathbf{h}_{\text{agg}}^{l-1}$ is obtained by aggregating the neighboring nodal feature vectors as}
\begin{align}
\mathbf{h}_{\text{agg}}^{l-1}= \sum_{j \in \mathcal{N}_{v_n} \cup \{n\}} \frac{\mathbf{h}_{v_j}^{l-1}}{\sqrt{d_nd_j}} \label{agg_gcn}.
\end{align}
}
For a node classification task  using an $L$ layer GraphSAGE or GCN framework let $K$ denote the number of classes. 
Also, let $\mathbf{h}_{v_n}^L \in \mathbb{R}^{1 \times K}$ denote the output of the final layer and $\mathbf{p}=\mathbf{h}_{v_n}^L$. The final Cross-Entropy loss of a node $v_n$ can be written as
 \begin{align}
        \mathcal{L}_{v_n} = -\log \frac{e^{\mathbf{p}_k}}{\sum_{j=1}^K e^{\mathbf{p}_j}},
     \label{celoss}
 \end{align}
where $k$ is the corresponding ground-truth label and $\mathbf{p}_j$ denotes the $j$th element of the vector $\mathbf{p}$.

\subsubsection{Graph Classification:}\label{sec:graphclass} 
Consider an $N$-node graph with $\mathbf{X}$ and $\tilde{\mathbf{A}}$ as the node feature matrix and the normalized adjacency matrix respectively, let  $\mathbf{H}^l \in \mathbb{R}^{N \times F}$ denote the hidden representations of the nodes at layer $l$. The output of a GraphSAGE layer can be written as 
\begin{align}
    \mathbf{H}^l &= \sigma(\tilde{\mathbf{A}}\mathbf{H}^{l-1}\mathbf{W}_1^\top + \mathbf{H}^{l-1}\mathbf{W}_2^\top+\overrightarrow{\mathbf{1}} \mathbf{b}), \label{eqsagegraph}
\end{align}
with $\mathbf{H}^0=\mathbf{X}$. Here $\overrightarrow{\mathbf{1}}$ denotes a column vector of all ones. Similarly, the output of a GCN layer can be written as
\begin{align}
    \mathbf{H}^l &= \sigma(\tilde{\mathbf{A}}\mathbf{H}^{l-1}\mathbf{W}^\top + \overrightarrow{\mathbf{1}} \mathbf{b}).
\end{align}
Following the final GNN layer, an MLP layer is applied
.  For an MLP with weights {$\mathbf{W} \in \mathbb{R}^{K \times NF }$},  the final readout from the MLP  layer can be written  as
\begin{align}
        \mathbf{h}_{\text{MLP}}&=\mathbf{h}_f\mathbf{W}^\top+ \mathbf{b}.
\end{align}
where $\mathbf{h}_f \in \mathbb{R}^{1 \times NF}$ is the flattened representation of $\mathbf{H}^L \in \mathbb{R}^{N \times F}$ which is the hidden representations obtained at the $L$th layer. 
Let $\mathbf{p}=\mathbf{h}_{\text{MLP}} \in \mathbb{R}^{1 \times K}$ be the final readout, the Cross-Entropy Loss $\mathcal{L}$ can be computed using \eqref{celoss}.

\subsection{Federated Graph Learning (FGL)} \label{fgl}
A typical FL framework consists of a server and $C$ clients. In the node-level FGL, each client samples a mini-batch of $B$ target nodes $\{\mathbf{x}_{v_1},\cdots,\mathbf{x}_{v_B}\}$ from the graph which is used to compute the gradients of the loss function in \eqref{celoss} with respect to the model parameters.
In the graph-level FGL, at each step $t$, each client has a set of multiple graphs. Similar to the node-level FGL the $c^{th}$ client samples a minibatch of $B$ graphs $\{\mathcal{G}_{c_{1}},\cdots, \mathcal{G}_{c_{B}}\}$ which are then used to compute the gradients and train the model. 
The gradients from $N$ clients are then broadcast to the server for updating the weights as 
\begin{align}
    \nabla_{\mathbf{W}^t}\mathcal{L} &= \frac{1}{B\cdot C} \sum_{c=1}^{C} \sum_{i=1}^{B}  \nabla_{\mathbf{W}^t} \mathcal{L}_{c,i},\\
   \mathbf{W}^{t+1} &= \mathbf{W}^{t} - \eta {\nabla_{\mathbf{W}^t}\mathcal{L}}
\end{align}
where $\nabla_{\mathbf{W}^t}\mathcal{L}_{c,i}$ denotes the gradient of the loss at the $c$-th client for the $i$-th data sample.

\section{Gradient Inversion Attack for GNNs} \label{sec:GIA}
We consider the scenario where there is an honest-but-curious server, i.e.,  the server has access to the model gradient with respect to the client's private graph data. Different attack scenarios are next discussed, where the server may also have access to different amounts of information. 

\subsection{Threat Model}
The attacker is an \textit{honest-but-curious} server that aims to reconstruct the user's graph data given the gradients of a GNN model. 
In addition to possessing knowledge of the gradients, the attacker may also have access to varying amounts of information about the user's private graph data. Hence we categorize the attacks based on the task as follows: 
\\
\textbf{Node Attacker 1}: has access to the gradients of a node classification model for the target node and reconstructs the nodal features of the target node and its neighbors.\\
\textbf{Node Attacker 2}: has access to the gradients of a node classification model for all the nodes in an egonet or subgraph. Specifically, the gradients are obtained for the loss function $\mathcal{L}_v \in \mathbb{R}^{1 \times N}$ with respect to the neural networks weights, where each element $\mathcal{L}_{v_i}$ denotes the loss for node $v_i$. Unlike traditional classification tasks, the adjacency information is also significant for graph data. In this setting, the graph structure could be unknown. The honest-but-curious server would also try to reconstruct the nodal features and/or the graph structure. Based on the input, we categorize the attacker furhter as follows
    \begin{itemize}[itemsep=0pt,topsep=0pt,parsep=0pt]
        \item \textbf{Node Attacker 2-G}: has access to the gradients of all the nodes in an egonet or subgraph along with the nodal features and reconstructs the underlying graph structure.
        \item \textbf{Node Attacker 2-N}: has access to the gradients of all the nodes in an egonet or subgraph along with the graph structure and reconstructs the nodal features.
        \item \textbf{Node Attacker 2-GN}: has access only to the gradients of all nodes in an egonet or subgraph and reconstructs the graph structure as well as the nodal features.
    \end{itemize}

\noindent \textbf{Graph Attacker}: has access to the gradients of the graph for a graph classification model. Similar to Node Attacker 2, in this setting the honest-but-curious server would try to reconstruct the nodal features and/or the graph structure. Hence, we categorize the attack in more detail as follows
\begin{itemize}[itemsep=0pt,topsep=0pt,parsep=0pt]
\item\textbf{Graph Attacker-G}: has access to the gradients along with the nodal features and reconstructs the underlying graph structure.
 \item\textbf{Graph Attacker-N}: has access to the gradients along with the graph structure and reconstructs the nodal features.
\item \textbf{Graph Attacker-GN}: has access to the gradients and reconstructs both the nodal features and the graph structure.
\end{itemize}

\begin{table}[h]
    \centering
    \caption{Overview of Threat Models: distinct attackers across seven settings. $(\mathbf{X})_{target}$ refers to nodal features of the target node and its neighbors. $\nabla_{\mathbf{W}}$ denotes the gradients of nodes, $\mathbf{X}$ represents features of all nodes in the subgraph, and $\mathbf{A}$ corresponds to the underlying graph structure.}
    \label{tab:overview-threat model}
    \begin{tabular}{l c | c c c}
        \toprule
        Attacker & setting & Training task  & Accessible information  & Reconstructed data \\
        \midrule
        Node Attacker 1 &  & Node classification  & $\nabla_{\mathbf{W}}$ of the target node  & $(\mathbf{X})_{target}$\\
        Node Attacker 2 & G  & Node classification   & $\nabla_{\mathbf{W}}$ and $\mathbf{X}$ & $\mathbf{A}$\\
        Node Attacker 2 & N  & Node classification & $\nabla_{\mathbf{W}}$ and $\mathbf{A}$ & $\mathbf{X}$\\
        Node Attacker 2 & GN  & Node classification & $\nabla_{\mathbf{W}}$ & $\mathbf{X}$ and  $\mathbf{A}$ \\
        Graph Attacker & G & Graph classification & $\nabla_{\mathbf{W}}$ and $\mathbf{X}$ & $\mathbf{A}$\\
        Graph Attacker & N & Graph classification & $\nabla_{\mathbf{W}}$ and $\mathbf{A}$ & $\mathbf{X}$\\
        Graph Attacker & GN & Graph classification & $\nabla_{\mathbf{W}}$ & $\mathbf{X}$ and  $\mathbf{A}$\\
        \bottomrule
    \end{tabular}
\end{table}

\subsection{Attack Mechanisms} \label{method}
In this section, GLG is introduced to attack a GNN framework and steal private user data using the gradients in a federated graph learning setting.  Specifically, given the gradients of the model from a client for the $i$-th data sample $\nabla_{\mathbf{W}}\mathcal{L}_{c,i}$, the goal is to reconstruct the input graph data sample that was fed to the model. For simplicity, we denote $\nabla_{\mathbf{W}}\mathcal{L}_{c,i}$ as $\nabla_{\mathbf{W}}\mathcal{L}$ for the graph classification task, and as $\nabla_{\mathbf{W}}\mathcal{L}_v$ for the node classification task. 
In \citet{dlg}, the input data to a model is reconstructed by matching the dummy gradients with the gradients. The dummy gradients are obtained by feeding dummy input data to the model. 
Given the gradients 
at a certain time step as $\nabla_{\mathbf{W}}\mathcal{L}$, the input data is reconstructed by minimizing the following objective function
\begin{align}
\mathbb{D} &= ||\nabla_{\mathbf{W}}\hat{\mathcal{L}} - \nabla_{\mathbf{W}}\mathcal{L}||^2. \label{l2loss}
\end{align}
Here $\nabla_{\mathbf{W}}\hat{\mathcal{L}}$ is the gradient of the loss obtained from the dummy input data.
In \citet{inverting} the cosine loss optimization function is used instead of the $\ell_2$ loss in \eqref{l2loss}. Specifically, the objective is to minimize the cosine loss between the actual gradients shared by a client and the gradients obtained from dummy input data
\begin{align}
\mathbb{D} =1 - \frac{\nabla_{\mathbf{W}}\hat{\mathcal{L}}\cdot \nabla_{\mathbf{W}}\mathcal{L}}{||\nabla_{\mathbf{W}}\hat{\mathcal{L}}|| ||\nabla_{\mathbf{W}}\mathcal{L}||}. \label{cosineloss}
\end{align}
In {this work}, it is assumed that the input label is known since in a classification task with cross-entropy loss, the labels can be readily inferred from the gradients \citep{idlg}. {For further details regarding this please refer to Appendix \ref{idlg_proof}}.
\paragraph{Feature Smoothness.} As shown in equations \ref{agg_sage} and  \ref{agg_gcn} Graph Neural Networks (GNNs) typically employ a feature aggregation mechanism wherein node representations are constructed by integrating features from neighboring nodes prior to weight multiplication at each network layer. However, the objective function proposed in equation \ref{cosineloss} may be insufficient for simultaneously reconstructing both nodal features and graph structure. To address this limitation, we leverage intrinsic properties characteristic of real-world graphs that can enhance graph data reconstruction.
Many real-world graph structures, such as social networks, exhibit a fundamental property of feature smoothness, wherein connected nodes tend to possess similar characteristics. To formally enforce and quantify this feature smoothness in the reconstructed graph data, we propose the following loss function, which is adapted from the approach introduced by \citet{graphmi}:
\begin{align}
    \mathcal{L}s &= tr(\mathbf{X}\mathbf{L}\mathbf{X}^\top) \nonumber\\
    &= \sum{i,j \in \mathcal{E}} (\frac{\mathbf{x}_{v_i}}{\sqrt{d_i}}-\frac{\mathbf{x}_{v_j}}{\sqrt{d_j}})
\end{align}

The loss function captures the similarity between adjacent nodes by minimizing the normalized feature differences across connected nodes.

In addition,  the Frobenius norm regularizer is also introduced such that the norm of $\mathbf{A}$ is bounded. Overall, the final objective can be written as
\begin{align} \label{regularized}
    \hat{\mathbb{D}} = \mathbb{D} + \alpha \mathcal{L}_s + \beta||\mathbf{A}||^2_{F}
\end{align}
where $\mathbb{D}$ is defined in \eqref{cosineloss}, $\alpha$ and $\beta$ are hyperparameters.
The iterative algorithms for reconstructing the private data from the gradients in a node classification task and a graph classification task are summarized in algorithms \ref{alg:algo1}, \ref{algo3}, and \ref{alg:algo2}, in Appendix \ref{attack_algo}, respectively. Specifically, Node Attacker 1 utilizes algorithm \ref{alg:algo1} for reconstructing the nodal features of the target node and its neighbors. Node Attackers 2 (G, N and GN) utilizes algorithm \ref{algo3} for reconstructing the nodal features and graph structure given the gradients of the loss function for each node in the egonet/subgraph. By default, algorithm \ref{algo3} assumes that both nodal features and graph structure are unknown and tries to reconstruct both. However, only the unknown parameter will be optimized when either of the two is known. For instance, Node Attacker 2-G only optimizes for $\mathbf{A}$ in Line 9 of algorithm \ref{algo3} while skipping Line 8 since the nodal features $\mathbf{X}$ is known. The same holds for Node Attacker 2-N. However, unlike the other two Node Attacker 2-GN optimizes for both. Similar logic holds for Graph Attacker (a, b and c) which utilizes algorithm \ref{alg:algo2}. Note that algorithms \ref{algo3} and \ref{alg:algo2} might seem similar since they are reconstructing the same variables. However, a key difference between the two is that algorithm \ref{algo3} is attacking a node classification model while algorithm \ref{alg:algo2} attacks a graph classification model. In algorithm \ref{algo3}, the gradients of the loss function are computed individually for each node. This is because each node within the subgraph has an associated label. In contrast, algorithm \ref{alg:algo2} calculates gradients for the entire graph, as the graph has a single label.

\textbf{Reconstructing the Adjacency Matrix:} In all the scenarios where the attacker is trying to reconstruct the adjacency matrix, projected gradient descent is applied. Specifically, the gradient descent step in Line 9 of algorithms \ref{algo3} and \ref{alg:algo2} entails a projection step, where each  entry $(i,j)$ of the adjacency matrix is updated through the entry-wise projection operator defined as
\begin{align}
      \hat{\mathbf{A}}_{ij} =\text{proj}_{[0,1]}(\tilde{\mathbf{A}}_{ij})= \begin{cases}
    1,&  \tilde{\mathbf{A}}_{ij} \geq 1\\
    0,              & \tilde{\mathbf{A}}_{ij} \leq 0 \\
    \tilde{\mathbf{A}}_{ij}, & \text{otherwise}
\end{cases}
\end{align}
Finally, to reconstruct the binary adjacency matrix, we consider each entry $\hat{\mathbf{A}}_{ij}$  as the probability of any edge between nodes $v_i$ and $v_j$ and the reconstructed binary adjacency matrix is obtained by sampling from a Bernoulli distribution with the corresponding probability at the last iteration, see also Line 11 of algorithm \ref{algo3} and algorithm \ref{alg:algo2}.
  
  Since the reconstructed adjacency matrix $\hat{\mathbf{A}}$ is obtained via sampling through Bernoulli distribution, the Frobenius regularizer has the effect of reducing the magnitude of entries of $\hat{\mathbf{A}}$. Consequently, this regularizer also contributes to the promotion of sparsity in the ultimately reconstructed adjacency matrix. This sparsity property is particularly relevant since real-world social networks are sparse.


\section{Theoretical Justification} \label{theory}
While prior work \citep{inverting} has shown that the input to a fully-connected layer can be reconstructed from the layer's gradients, the analysis of GNNs is largely under-explored. Below we will extensively study whether similar conclusions apply to GraphSAGE and GCN and what parts of the graph input can be reconstructed from the gradients.  

\subsection{Node Attacker}
{Consider a node classification setting, where the inputs to a GNN layer are the target nodal features and the neighboring nodal features denoted as $\mathbf{x}_v$ and $\{\mathbf{x}_{v_j}\}_{j \in \mathcal{N}_v}$. 
Through theoretical analysis we will show that parts of the input to a GCN or GraphSAGE layer can be reconstructed from the gradients analytically without solving an iterative optimization problem.} 


{

\begin{proposition} \label{prop:prop1}
For a GraphSAGE defined in \eqref{eqsagenode} or a GCN layer defined in \eqref{eqgcnnode}, Node Attacker 1 can reconstruct the aggregated representations of a target node denoted as $\mathbf{x}_v^{\text{agg}}$ (see  \eqref{agg_gcn} with $\mathbf{h}_{\text{agg}}^{0}=\mathbf{x}_v^{\text{agg}}$ ) 
given the gradients of the first layer as $\mathbf{x}_v^{\text{agg}}=\nabla_{(\mathbf{W})_i}\mathcal{L}_v/\nabla_{(\mathbf{b})_i}\mathcal{L}_v$  for GCN and as $\mathbf{x}_v^{\text{agg}}=\nabla_{(\mathbf{W}_1)_i}\mathcal{L}_v/\nabla_{(\mathbf{b})_i}\mathcal{L}_v$ for GraphSAGE provided that $\nabla_{(\mathbf{b})_i}\mathcal{L}_v \neq 0$.
\end{proposition}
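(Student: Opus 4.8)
The plan is to reduce the statement to a one-line chain-rule computation, exactly paralleling the fully-connected case of \cite{inverting}, by exploiting that the aggregated representation enters the first GNN layer only affinely through the pre-activation of $\sigma$. First I would name the first-layer pre-activation vector $\mathbf{z}\in\mathbb{R}^{1\times F}$. For GCN, equation \eqref{eqgcnnode} with $\mathbf{h}_{\text{agg}}^{0}=\mathbf{x}_v^{\text{agg}}$ gives $\mathbf{z}=\mathbf{x}_v^{\text{agg}}\mathbf{W}^\top+\mathbf{b}$ and $\mathbf{h}_{v_n}^1=\sigma(\mathbf{z})$; for GraphSAGE, equation \eqref{eqsagenode} gives $\mathbf{z}=\mathbf{x}_v^{\text{agg}}\mathbf{W}_1^\top+\mathbf{x}_{v_n}\mathbf{W}_2^\top+\mathbf{b}$, again with $\mathbf{h}_{v_n}^1=\sigma(\mathbf{z})$. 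The crucial observation is that $\mathcal{L}_v$ depends on $\mathbf{W}$ (resp.\ $\mathbf{W}_1$) and $\mathbf{b}$ \emph{only} through $\mathbf{z}$: the activation, all subsequent layers, and the cross-entropy loss \eqref{celoss} all factor through $\mathbf{z}$, so a single scalar per output coordinate carries the entire downstream dependence.

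Next I would write the $i$-th coordinate explicitly as $z_i=\sum_{d=1}^{D}(\mathbf{x}_v^{\text{agg}})_d(\mathbf{W})_{id}+(\mathbf{b})_i$ and apply the chain rule. Since the $i$-th row of $\mathbf{W}$ and the $i$-th bias entry feed into $z_i$ alone (distinct rows map to distinct coordinates of $\mathbf{z}$, so there are no cross terms), this yields $\nabla_{(\mathbf{W})_{id}}\mathcal{L}_v=\tfrac{\partial\mathcal{L}_v}{\partial z_i}\,(\mathbf{x}_v^{\text{agg}})_d$ and $\nabla_{(\mathbf{b})_i}\mathcal{L}_v=\tfrac{\partial\mathcal{L}_v}{\partial z_i}$. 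Forming the coordinate-wise ratio, the common backpropagated factor $\tfrac{\partial\mathcal{L}_v}{\partial z_i}$ cancels, giving $\nabla_{(\mathbf{W})_{id}}\mathcal{L}_v/\nabla_{(\mathbf{b})_i}\mathcal{L}_v=(\mathbf{x}_v^{\text{agg}})_d$ for every $d$, which is precisely the claimed vector identity $\mathbf{x}_v^{\text{agg}}=\nabla_{(\mathbf{W})_i}\mathcal{L}_v/\nabla_{(\mathbf{b})_i}\mathcal{L}_v$. The hypothesis $\nabla_{(\mathbf{b})_i}\mathcal{L}_v\neq 0$ is exactly the condition making this division legitimate (equivalently $\tfrac{\partial\mathcal{L}_v}{\partial z_i}\neq 0$).

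For GraphSAGE the only change is that the pre-activation carries the extra summand $\mathbf{x}_{v_n}\mathbf{W}_2^\top$, which is independent of $\mathbf{W}_1$ and therefore contributes nothing to $\nabla_{(\mathbf{W}_1)_{id}}\mathcal{L}_v$; the identical cancellation then goes through with $\mathbf{W}_1$ in place of $\mathbf{W}$, yielding $\mathbf{x}_v^{\text{agg}}=\nabla_{(\mathbf{W}_1)_i}\mathcal{L}_v/\nabla_{(\mathbf{b})_i}\mathcal{L}_v$. I would remark that the argument is agnostic to the choice of $\sigma$ and to the architecture beyond the first layer, since everything downstream is absorbed into the single scalar $\tfrac{\partial\mathcal{L}_v}{\partial z_i}$.

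There is essentially no hard step: the entire content is the affine dependence of $\mathbf{z}$ on the parameters plus the cancellation of the shared backpropagated factor. The only points requiring care are to confirm, via \eqref{agg_gcn}, that $\mathbf{x}_v^{\text{agg}}$ is the genuine input to the first layer (so that what is recovered is the \emph{aggregated} feature, not the individual neighbor features $\{\mathbf{x}_{v_j}\}$ separately), and to note the single-row coupling $z_i\leftrightarrow(\mathbf{W})_i,(\mathbf{b})_i$ that makes the per-coordinate ratio clean and consistent across all $d$.
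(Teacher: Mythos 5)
Your proposal is correct and follows essentially the same route as the paper: both arguments exploit the affine dependence of the first-layer pre-activation on $(\mathbf{W})_i$ (resp.\ $(\mathbf{W}_1)_i$) and $(\mathbf{b})_i$, apply the chain rule to get $\nabla_{(\mathbf{W})_i}\mathcal{L}_v=\nabla_{(\mathbf{b})_i}\mathcal{L}_v\cdot\mathbf{x}_v^{\text{agg}}$, and divide. Your write-up is merely a more explicit, coordinate-wise version of the paper's four-line derivation, including the same observation that the argument is agnostic to $\sigma$ and the downstream architecture.
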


\begin{proof}
 We show the proof for a GCN layer. 
 For a  GCN layer, the aggregated input at the target node can be reconstructed from the gradients with respect to the weights of the first layer by writing the following equations
\begin{align}
\mathbf{h}_v &= \sigma(\tilde{\mathbf{h}}_v ), \label{eq14} \\ 
\tilde{\mathbf{h}}_v &= \mathbf{x}_v^{\text{agg}}\mathbf{W}^\top+ \mathbf{b}, \label{eqgcn}\\
\nabla_{(\mathbf{W})_i}\mathcal{L}_v &= \nabla_{(\tilde{\mathbf{h}}_v)_i}\mathcal{L}_v\cdot \nabla_{(\mathbf{W})_i}(\tilde{\mathbf{h}}_v)_i, \label{eq16} \\ 
\nabla_{(\mathbf{W})_i}\mathcal{L}_v &= \nabla_{(\mathbf{b})_i}\mathcal{L}_v \cdot \mathbf{x}_v^{\text{agg}} \label{eq17}
\end{align}
where $(\mathbf{W})_i$ denotes the $i$th row of $\mathbf{W}$, $(\mathbf{b})_i$ and $(\tilde{\mathbf{h}}_v)_i$  denotes the $i$th element of vectors $\mathbf{b}$ and $\tilde{\mathbf{h}}_v$ respectively. It can be concluded from \eqref{eq17} that  $\mathbf{x}_v^{\text{agg}}$
can be reconstructed as $\nabla_{(\mathbf{W})_i}\mathcal{L}_v/\nabla_{(\mathbf{b})_i}\mathcal{L}_v$ as long as $\nabla_{(\mathbf{b})_i}\mathcal{L}_v \neq 0$. 
\end{proof}
It can be observed from the above proof that the analytic reconstruction is independent of the loss function or the non-linearity used after the layer and only depends on the gradients of the loss function with respect to the weights of the model.

\begin{proposition} \label{prop:prop2}  
{For a GraphSAGE layer defined in \eqref{eqsagenode}, Node Attacker 1 can reconstruct the target node features given the gradients of the first layer as $\mathbf{x}_v=\nabla_{(\mathbf{W}_2)_i}\mathcal{L}_v/\nabla_{(\mathbf{b})_i}\mathcal{L}_v$ as long as $\nabla_{(\mathbf{b})_i}\mathcal{L}_v \neq 0$.} 
\end{proposition}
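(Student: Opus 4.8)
The plan is to mirror the argument used for Proposition~\ref{prop:prop1}, exploiting the fact that in the GraphSAGE layer of \eqref{eqsagenode} the two weight matrices enter the pre-activation symmetrically: $\mathbf{W}_1$ multiplies the aggregated representation $\mathbf{x}_v^{\text{agg}}$, while $\mathbf{W}_2$ multiplies the target node's own feature $\mathbf{x}_v$, and both terms share the single bias $\mathbf{b}$. Since Proposition~\ref{prop:prop1} recovers $\mathbf{x}_v^{\text{agg}}$ from the gradient of the loss with respect to $\mathbf{W}_1$, the same structure should yield $\mathbf{x}_v$ from the gradient with respect to $\mathbf{W}_2$, with the role of the two linear terms interchanged. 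As in the previous proof, the key point is that the reconstruction depends only on the chain-rule structure of the first layer, not on the activation $\sigma$ or on the form of $\mathcal{L}_v$.

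Concretely, first I would write the first-layer pre-activation as $\tilde{\mathbf{h}}_v = \mathbf{x}_v^{\text{agg}}\mathbf{W}_1^\top + \mathbf{x}_v\mathbf{W}_2^\top + \mathbf{b}$ with $\mathbf{h}_v = \sigma(\tilde{\mathbf{h}}_v)$, keeping both linear terms rather than collapsing them as in \eqref{eqgcn}. Next I would apply the chain rule row-wise. For the $i$-th row of $\mathbf{W}_2$ we have $\nabla_{(\mathbf{W}_2)_i}\mathcal{L}_v = \nabla_{(\tilde{\mathbf{h}}_v)_i}\mathcal{L}_v \cdot \nabla_{(\mathbf{W}_2)_i}(\tilde{\mathbf{h}}_v)_i$, and since $(\tilde{\mathbf{h}}_v)_i$ depends on $(\mathbf{W}_2)_i$ only through the term $\mathbf{x}_v (\mathbf{W}_2)_i^\top$, the local derivative is $\nabla_{(\mathbf{W}_2)_i}(\tilde{\mathbf{h}}_v)_i = \mathbf{x}_v$, giving $\nabla_{(\mathbf{W}_2)_i}\mathcal{L}_v = \nabla_{(\tilde{\mathbf{h}}_v)_i}\mathcal{L}_v \cdot \mathbf{x}_v$. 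In parallel, because $\nabla_{(\mathbf{b})_i}(\tilde{\mathbf{h}}_v)_i = 1$, the bias gradient is $\nabla_{(\mathbf{b})_i}\mathcal{L}_v = \nabla_{(\tilde{\mathbf{h}}_v)_i}\mathcal{L}_v$. Dividing the two identities cancels the shared scalar factor and yields $\mathbf{x}_v = \nabla_{(\mathbf{W}_2)_i}\mathcal{L}_v / \nabla_{(\mathbf{b})_i}\mathcal{L}_v$, valid precisely when $\nabla_{(\mathbf{b})_i}\mathcal{L}_v \neq 0$.

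Since the argument is structurally identical to Proposition~\ref{prop:prop1}, there is no deep obstacle; the main point to verify carefully is the coordinate-wise separation. Specifically, one must check that the $i$-th pre-activation coordinate $(\tilde{\mathbf{h}}_v)_i$ depends only on the $i$-th rows of $\mathbf{W}_1$ and $\mathbf{W}_2$ and the $i$-th bias entry, so that the same scalar $\nabla_{(\tilde{\mathbf{h}}_v)_i}\mathcal{L}_v$ appears in both gradient expressions and cancels cleanly in the ratio. The extra $\mathbf{x}_v^{\text{agg}}\mathbf{W}_1^\top$ term is harmless here because it carries no dependence on $\mathbf{W}_2$ and therefore contributes nothing to $\nabla_{(\mathbf{W}_2)_i}(\tilde{\mathbf{h}}_v)_i$; this is exactly the reason the two features $\mathbf{x}_v^{\text{agg}}$ and $\mathbf{x}_v$ can be disentangled and recovered separately, through $\mathbf{W}_1$ and $\mathbf{W}_2$ respectively.
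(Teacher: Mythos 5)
Your proof is correct and follows exactly the same route as the paper's: write the first-layer pre-activation $\tilde{\mathbf{h}}_v = \mathbf{x}_v^{\text{agg}}\mathbf{W}_1^\top + \mathbf{x}_v\mathbf{W}_2^\top + \mathbf{b}$, apply the chain rule row-wise to get $\nabla_{(\mathbf{W}_2)_i}\mathcal{L}_v = \nabla_{(\mathbf{b})_i}\mathcal{L}_v \cdot \mathbf{x}_v$, and divide. You in fact spell out the cancellation of the shared scalar $\nabla_{(\tilde{\mathbf{h}}_v)_i}\mathcal{L}_v$ slightly more explicitly than the paper does.
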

\begin{proof}
See Appendix \ref{prop2_proof}.
\end{proof}
In the following, we focus on theoretically analyzing the attack setting of Node Attacker 2, specifically targeting the reconstruction of nodal features and/or the graph structure. To reiterate, this setting involves leveraging the gradients of the loss function of GNNs associated with each node in the egonet or subgraph.

\begin{proposition}
\label{prop:prop4}
    For a GCN layer defined in equations \eqref{eqgcnnode} and \eqref{agg_gcn}, Node Attacker 2-G can reconstruct $\tilde{\mathbf{A}}$ given the gradients of the first layer as long as $\mathbf{X}$ is full row-rank.
\end{proposition}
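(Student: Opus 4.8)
The plan is to reduce the recovery of the normalized adjacency $\tilde{\mathbf{A}}$ to two ingredients: (i) a per-node application of Proposition~\ref{prop:prop1}, and (ii) the inversion of a single linear system in which the known feature matrix $\mathbf{X}$ is the coefficient matrix. First I would observe that Node Attacker 2-a possesses the first-layer gradients of the loss for \emph{every} node $v_i$ in the egonet, so Proposition~\ref{prop:prop1} can be invoked $N$ times, once per node, to recover each aggregated representation $\mathbf{x}_{v_i}^{\text{agg}} = \nabla_{(\mathbf{W})_i}\mathcal{L}_{v_i}/\nabla_{(\mathbf{b})_i}\mathcal{L}_{v_i}$, valid whenever the corresponding bias gradient is nonzero.

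Next I would identify these recovered vectors with the rows of $\tilde{\mathbf{A}}\mathbf{X}$. By the GCN aggregation rule in \eqref{agg_gcn}, $\mathbf{x}_{v_i}^{\text{agg}} = \sum_{j \in \mathcal{N}_{v_i}\cup\{i\}} \mathbf{x}_{v_j}/\sqrt{d_i d_j}$, which is exactly the $i$-th row $(\tilde{\mathbf{A}}\mathbf{X})_i$ when $\tilde{\mathbf{A}}=\mathbf{D}^{-1/2}(\mathbf{A}+\mathbf{I})\mathbf{D}^{-1/2}$. Stacking the $N$ recovered rows therefore assembles the full matrix $\mathbf{M} := \tilde{\mathbf{A}}\mathbf{X} \in \mathbb{R}^{N\times D}$, which is now a fully known quantity.

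With $\mathbf{X}$ known (the defining assumption of Node Attacker 2-a) and $\mathbf{M}$ in hand, the remaining task is to solve $\tilde{\mathbf{A}}\mathbf{X} = \mathbf{M}$ for the unknown $\tilde{\mathbf{A}}$. This is where the full-row-rank hypothesis enters. If $\mathbf{X}$ has full row rank $N$ (which forces $D \ge N$), then $\mathbf{X}\mathbf{X}^\top \in \mathbb{R}^{N\times N}$ is invertible, so $\mathbf{X}^\top(\mathbf{X}\mathbf{X}^\top)^{-1}$ is a right inverse of $\mathbf{X}$ and the exact closed-form reconstruction is $\tilde{\mathbf{A}} = \mathbf{M}\mathbf{X}^\top(\mathbf{X}\mathbf{X}^\top)^{-1}$; substituting $\mathbf{M}=\tilde{\mathbf{A}}\mathbf{X}$ confirms this returns the true $\tilde{\mathbf{A}}$ since $\mathbf{X}\mathbf{X}^\top(\mathbf{X}\mathbf{X}^\top)^{-1}=\mathbf{I}_N$.

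The step I expect to be the main obstacle is not a calculation but a justification of tightness and applicability. On the one hand I would argue the rank condition is essentially necessary: if $\mathbf{X}$ is not full row rank, the linear map $\mathbf{B}\mapsto\mathbf{B}\mathbf{X}$ on $N\times N$ matrices has a nontrivial kernel, so distinct normalized adjacency matrices produce identical aggregated features $\mathbf{M}$ and $\tilde{\mathbf{A}}$ cannot be uniquely identified. On the other hand I would verify that the per-node invocation of Proposition~\ref{prop:prop1} is legitimate in the egonet setting — that the first-layer gradient of each node's loss factors through that node's own pre-activation row — and flag the genericity caveat that every bias gradient $\nabla_{(\mathbf{b})_i}\mathcal{L}_{v_i}$ must be nonzero for the row-wise division to be well defined.
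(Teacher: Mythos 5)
Your proof follows essentially the same route as the paper's: invoke Proposition~\ref{prop:prop1} node by node to assemble $\mathbf{X}^{\text{agg}}=\tilde{\mathbf{A}}\mathbf{X}$, then invert the linear system using the full-row-rank assumption on $\mathbf{X}$ (your explicit right inverse $\mathbf{X}^\top(\mathbf{X}\mathbf{X}^\top)^{-1}$ coincides with the pseudoinverse $\mathbf{X}^\dagger$ the paper uses). Your added remarks on the necessity of the rank condition and the nonzero-bias-gradient caveat are correct refinements the paper leaves implicit.
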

\begin{proof}
    From Proposition \ref{prop:prop1} we know that given the gradients for a node we can reconstruct the aggregated representations of that node. Since we know the gradients for each node we can reconstruct the aggregated representation for each node. Let $\mathbf{X}^{\text{agg}} \in \mathbb{R}^{N \times D}$ denote the aggregated node representation matrix where each row $i$ denotes the aggregated node representation of node $v_i$,  \eqref{agg_gcn} in the matrix form can be written as the following for $l-1=0$.
    \begin{align}
        \mathbf{X}^{\text{agg}} = \tilde{\mathbf{A}}\mathbf{X}.
    \end{align}
    Since $\mathbf{X}^{\text{agg}}$ and $\mathbf{X}$ are known, the adjacency matrix can be then reconstructed as $\tilde{\mathbf{A}}=\mathbf{X}^{\text{agg}}\mathbf{X}^\dagger $.
\end{proof} 

\begin{proposition}
\label{prop:prop6}
    For a GCN layer defined in equations \eqref{agg_gcn} and \eqref{eqgcnnode}, Node Attacker 2-N can reconstruct $\mathbf{X}$ given the gradients of the first layer if $\tilde{\mathbf{A}}$ is full column-rank.
\end{proposition}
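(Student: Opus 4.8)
The plan is to mirror the argument used for Proposition~\ref{prop:prop4}, but exploiting the full column-rank hypothesis on $\tilde{\mathbf{A}}$ instead of the full row-rank hypothesis on $\mathbf{X}$. The key observation is that the two statements are duals of each other: Proposition~\ref{prop:prop4} solves the matrix equation $\mathbf{X}^{\text{agg}} = \tilde{\mathbf{A}}\mathbf{X}$ for $\tilde{\mathbf{A}}$ when $\mathbf{X}$ is invertible on the right, whereas here we solve the same equation for $\mathbf{X}$ when $\tilde{\mathbf{A}}$ is invertible on the left.

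First I would recover the aggregated representation matrix $\mathbf{X}^{\text{agg}} \in \mathbb{R}^{N \times D}$ exactly as in the proof of Proposition~\ref{prop:prop4}: by Proposition~\ref{prop:prop1}, the gradients of the first layer at each node $v_i$ yield its aggregated representation $\mathbf{x}_{v_i}^{\text{agg}}$, and since Node Attacker~2-b has access to the per-node gradients for every node in the egonet, stacking these rows reconstructs the full matrix $\mathbf{X}^{\text{agg}}$. This step requires the nonvanishing-bias-gradient condition inherited from Proposition~\ref{prop:prop1}, which I would note explicitly. Writing the aggregation rule \eqref{agg_gcn} in matrix form at layer $l-1=0$ then gives the identity
\begin{align}
\mathbf{X}^{\text{agg}} = \tilde{\mathbf{A}}\mathbf{X},
\end{align}
in which $\mathbf{X}^{\text{agg}}$ is now known and $\tilde{\mathbf{A}}$ is known by assumption for this attacker.

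Next I would solve for the unknown $\mathbf{X}$. Since $\tilde{\mathbf{A}}$ is full column-rank, the Gram matrix $\tilde{\mathbf{A}}^\top \tilde{\mathbf{A}}$ is invertible, so the Moore-Penrose pseudoinverse $\tilde{\mathbf{A}}^\dagger = (\tilde{\mathbf{A}}^\top \tilde{\mathbf{A}})^{-1}\tilde{\mathbf{A}}^\top$ is a genuine left inverse satisfying $\tilde{\mathbf{A}}^\dagger \tilde{\mathbf{A}} = \mathbf{I}$. Left-multiplying the matrix equation then recovers the nodal features exactly as
\begin{align}
\mathbf{X} = \tilde{\mathbf{A}}^\dagger \mathbf{X}^{\text{agg}}.
\end{align}
I would close by remarking that full column-rank guarantees the solution is unique (the map $\mathbf{X} \mapsto \tilde{\mathbf{A}}\mathbf{X}$ has trivial kernel), so no spurious feature matrix is consistent with the recovered aggregates.

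I do not expect a serious obstacle here, since the algebra is a direct transpose-dual of Proposition~\ref{prop:prop4}; the only points needing care are (i) making the reduction to $\mathbf{X}^{\text{agg}} = \tilde{\mathbf{A}}\mathbf{X}$ rigorous by citing Propositions~\ref{prop:prop1} and~\ref{prop:prop4} rather than re-deriving the gradient identities, and (ii) being precise that ``full column-rank'' is exactly the condition under which the \emph{left} pseudoinverse reconstructs $\mathbf{X}$ uniquely, paralleling the role of the \emph{right} pseudoinverse for $\tilde{\mathbf{A}}$ in the earlier proposition.
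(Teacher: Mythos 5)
Your proposal is correct and follows essentially the same route as the paper: recover $\mathbf{X}^{\text{agg}}$ row-by-row via Proposition~\ref{prop:prop1} and then invert the relation $\mathbf{X}^{\text{agg}} = \tilde{\mathbf{A}}\mathbf{X}$ using the full column-rank hypothesis. In fact your formula $\mathbf{X} = \tilde{\mathbf{A}}^\dagger\mathbf{X}^{\text{agg}}$ places the pseudoinverse on the correct (left) side, whereas the paper writes $\mathbf{X}=\mathbf{X}^{\text{agg}}\tilde{\mathbf{A}}^\dagger$, which is dimensionally inconsistent for $D\neq N$ and appears to be a typo; your version, together with the explicit remark that the nonvanishing bias-gradient condition is inherited from Proposition~\ref{prop:prop1}, is the more careful statement.
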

\begin{proof}
Similar to Proposition \ref{prop:prop4} once we have $\mathbf{X}^{\text{agg}}$, $\mathbf{X}$ can be obtained as $\mathbf{X}=\mathbf{X}^{\text{agg}}\tilde{\mathbf{A}}^\dagger$
\end{proof}

\begin{proposition}
\label{prop:prop5}
    For a GraphSAGE layer defined in \eqref{eqsagenode}, Node Attacker 2-GN can reconstruct both $\mathbf{X}$ and $\tilde{\mathbf{A}}$ given only the gradients of the first layer for each node.
\end{proposition}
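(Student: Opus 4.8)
The plan is to exploit the feature that distinguishes a GraphSAGE layer from a GCN layer: GraphSAGE carries \emph{two} separate weight matrices, namely $\mathbf{W}_2$ acting on each node's own feature vector and $\mathbf{W}_1$ acting on the aggregated neighbor representation. This separation is precisely what enables Node Attacker 2-c to disentangle the features from the structure with no prior knowledge of either, whereas a GCN layer (with its single shared weight) only permits recovery of the aggregated representation. The strategy is therefore to recover the feature matrix $\mathbf{X}$ and the aggregated feature matrix $\mathbf{X}^{\text{agg}}$ analytically and independently from the per-node gradients, and then solve the known linear relation between them for $\tilde{\mathbf{A}}$.

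Concretely, I would proceed in three steps. First, apply Proposition~\ref{prop:prop2} node-by-node: for each node $v$ in the subgraph, the gradient ratio $\nabla_{(\mathbf{W}_2)_i}\mathcal{L}_v / \nabla_{(\mathbf{b})_i}\mathcal{L}_v$ returns its own feature vector $\mathbf{x}_v$, and stacking these rows reconstructs the full feature matrix $\mathbf{X}$. Second, apply Proposition~\ref{prop:prop1} in the same node-by-node fashion, using the ratio $\nabla_{(\mathbf{W}_1)_i}\mathcal{L}_v / \nabla_{(\mathbf{b})_i}\mathcal{L}_v$ to recover each aggregated representation $\mathbf{x}_v^{\text{agg}}$, which stack into $\mathbf{X}^{\text{agg}}$. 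Third, observe that the mean-aggregation rule \eqref{agg_sage} written in matrix form reads $\mathbf{X}^{\text{agg}} = \tilde{\mathbf{A}}\mathbf{X}$; since both matrices are now known, solve for the normalized adjacency matrix exactly as in Proposition~\ref{prop:prop4} by right-multiplying with the pseudoinverse, $\tilde{\mathbf{A}} = \mathbf{X}^{\text{agg}}\mathbf{X}^\dagger$.

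The main obstacle, and the step demanding the most care, is this final inversion. For $\mathbf{X}^\dagger$ to act as a genuine right inverse so that $\mathbf{X}\mathbf{X}^\dagger = \mathbf{I}$ and the recovery of $\tilde{\mathbf{A}}$ is exact, the feature matrix $\mathbf{X}$ must have full row rank, mirroring the hypothesis of Proposition~\ref{prop:prop4}; I would state this rank condition explicitly rather than leave it implicit. The two analytic recovery steps also inherit the nonzero-bias-gradient requirement $\nabla_{(\mathbf{b})_i}\mathcal{L}_v \neq 0$ from Propositions~\ref{prop:prop1} and~\ref{prop:prop2}, which I would carry over as a standing assumption. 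Once these conditions are in place, the result follows as a direct composition of the three earlier propositions and introduces no new optimization problem.
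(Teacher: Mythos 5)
Your proposal is correct and follows essentially the same route as the paper: apply Proposition~\ref{prop:prop2} per node to stack up $\mathbf{X}$, apply Proposition~\ref{prop:prop1} per node to stack up $\mathbf{X}^{\text{agg}}$, and then recover $\tilde{\mathbf{A}}=\mathbf{X}^{\text{agg}}\mathbf{X}^\dagger$ as in Proposition~\ref{prop:prop4}. You are in fact slightly more careful than the paper, which leaves the full row-rank condition on $\mathbf{X}$ and the nonzero bias-gradient condition implicit in the statement of Proposition~\ref{prop:prop5}.
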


\begin{proof}
    Proposition \ref{prop:prop2} states that for a GraphSAGE layer, the target node features can be reconstructed given the gradients of the first layer for that node. Similarly, if we have access to the gradients for each node in the graph we can reconstruct all the nodal features. Let $\mathbf{X}$ denote this matrix.  From Propositions \ref{prop:prop1} it is known that for a GraphSAGE layer, the aggregated node representations $\mathbf{X}^{\text{agg}}$ can be reconstructed. With both $\mathbf{X}$ and $\mathbf{X}^{\text{agg}}$ in hand, the adjacency matrix can be reconstructed as $\tilde{\mathbf{A}}=\mathbf{X}^{\text{agg}}\mathbf{X}^\dagger $ similar to Proposition $\ref{prop:prop4}$.
\end{proof}

The aforementioned proposition has significant privacy implications. 
Even without any prior knowledge of the graph data, the attacker can successfully reconstruct both the nodal features $\mathbf{X}$ and the underlying graph structure $\mathbf{A}$. This result is particularly important as one would anticipate that the intertwined nature of graph data makes it impossible to independently reconstruct $\mathbf{X}$ and $\mathbf{A}$, which is however shown not to be true. 


\subsection{Graph Attacker}
{
The present subsection provides theoretical analysis for a Graph Attacker-G in a scenario where prior information may be available for the graph data to be attacked. It can be shown that for { GraphSAGE, prior knowledge of the nodal feature matrix $\mathbf{X}$ can help reconstruct the graph structure $\mathbf{A}$, making the framework more vulnerable to the gradient-inversion attacks. }}

\begin{proposition} \label{prop:prop3}
{ For a GraphSAGE framework defined in \eqref{eqsagegraph} with $\mathbf{H}^0=\mathbf{X}$, Graph Attacker-G can reconstruct $\tilde{\mathbf{A}}$ from  the gradients of the first layer as $\tilde{\mathbf{A}}=\mathbf{X}(\nabla_{\mathbf{W}_2}\mathcal{L})^\dagger(\nabla_{\mathbf{W}_1}\mathcal{L}) \mathbf{X}^\dagger$ if the nodal feature matrix $\mathbf{X}$ is full-row rank and known}.

\end{proposition}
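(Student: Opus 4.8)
The plan is to reduce the claim to a purely linear-algebraic elimination: express both first-layer weight gradients in terms of one common upstream gradient, and then cancel that gradient using the full-row-rank hypothesis on $\mathbf{X}$. First I would specialize \eqref{eqsagegraph} to the first layer, writing the pre-activation as $\tilde{\mathbf{H}}^1 = \tilde{\mathbf{A}}\mathbf{X}\mathbf{W}_1^\top + \mathbf{X}\mathbf{W}_2^\top + \overrightarrow{\mathbf{1}}\mathbf{b}$ with $\mathbf{H}^1=\sigma(\tilde{\mathbf{H}}^1)$. Setting $\mathbf{G} := \nabla_{\tilde{\mathbf{H}}^1}\mathcal{L} \in \mathbb{R}^{N \times F}$ for the gradient of the loss with respect to this pre-activation, an entry-by-entry application of the chain rule (matrix-form analogue of \eqref{eq16}--\eqref{eq17}) yields
\begin{align}
\nabla_{\mathbf{W}_1}\mathcal{L} &= \mathbf{G}^\top \tilde{\mathbf{A}}\mathbf{X}, \\
\nabla_{\mathbf{W}_2}\mathcal{L} &= \mathbf{G}^\top \mathbf{X}.
\end{align}
The crucial structural observation is that both gradients share the same left factor $\mathbf{G}^\top$ and the same right factor $\mathbf{X}$, differing only by the matrix $\tilde{\mathbf{A}}$ sandwiched inside the $\mathbf{W}_1$ gradient. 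This is exactly what decouples the (unknown) activation and loss details from the quantity we want, since $\mathbf{G}$ itself never needs to be computed.

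Next I would eliminate the two unknown factors by pseudoinverses. Since $\mathbf{X}$ is full row rank, $\mathbf{X}\mathbf{X}^\dagger = \mathbf{I}_N$, so right-multiplying the $\mathbf{W}_1$ identity by $\mathbf{X}^\dagger$ strips the trailing $\mathbf{X}$, giving $(\nabla_{\mathbf{W}_1}\mathcal{L})\mathbf{X}^\dagger = \mathbf{G}^\top\tilde{\mathbf{A}}$. It then remains to remove $\mathbf{G}^\top$ on the left, which is achieved by the prefactor $\mathbf{X}(\nabla_{\mathbf{W}_2}\mathcal{L})^\dagger$: substituting $\nabla_{\mathbf{W}_2}\mathcal{L} = \mathbf{G}^\top\mathbf{X}$ and using the reverse-order law $(\mathbf{G}^\top\mathbf{X})^\dagger = \mathbf{X}^\dagger(\mathbf{G}^\top)^\dagger$ gives $\mathbf{X}(\nabla_{\mathbf{W}_2}\mathcal{L})^\dagger = \mathbf{X}\mathbf{X}^\dagger(\mathbf{G}^\top)^\dagger = (\mathbf{G}^\top)^\dagger$. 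Chaining these and using $(\mathbf{G}^\top)^\dagger\mathbf{G}^\top = \mathbf{I}_N$ yields $\mathbf{X}(\nabla_{\mathbf{W}_2}\mathcal{L})^\dagger(\nabla_{\mathbf{W}_1}\mathcal{L})\mathbf{X}^\dagger = (\mathbf{G}^\top)^\dagger\mathbf{G}^\top\tilde{\mathbf{A}} = \tilde{\mathbf{A}}$, which is precisely the claimed closed form.

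The main obstacle is the pseudoinverse bookkeeping in this last step: the cancellations $\mathbf{X}\mathbf{X}^\dagger = \mathbf{I}_N$, $(\mathbf{G}^\top)^\dagger\mathbf{G}^\top = \mathbf{I}_N$, and the reverse-order law are not valid for arbitrary matrices. The hypothesis on $\mathbf{X}$ handles the first, but the other two silently require the upstream gradient $\mathbf{G}^\top$ to have full column rank (equivalently $\mathrm{rank}(\mathbf{G}) = N$, which also forces $F \geq N$). I would therefore state this rank condition explicitly, invoking the classical sufficient condition for the reverse-order law (left factor of full column rank, right factor of full row rank), and remark that both rank conditions hold generically. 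A cleaner route that avoids the reverse-order law altogether is to first recover $\mathbf{G}^\top = (\nabla_{\mathbf{W}_2}\mathcal{L})\mathbf{X}^\dagger$ directly from the full-row-rank property, then solve the linear system $\mathbf{G}^\top\tilde{\mathbf{A}} = (\nabla_{\mathbf{W}_1}\mathcal{L})\mathbf{X}^\dagger$ to obtain $\tilde{\mathbf{A}} = (\mathbf{G}^\top)^\dagger(\nabla_{\mathbf{W}_1}\mathcal{L})\mathbf{X}^\dagger$; a short computation then confirms that this agrees with the stated expression.
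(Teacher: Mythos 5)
Your proof is correct and follows essentially the same route as the paper's: both derive $\nabla_{\mathbf{W}_1}\mathcal{L} = \mathbf{G}^\top\tilde{\mathbf{A}}\mathbf{X}$ and $\nabla_{\mathbf{W}_2}\mathcal{L} = \mathbf{G}^\top\mathbf{X}$ with $\mathbf{G} := \nabla_{\tilde{\mathbf{H}}}\mathcal{L}$, recover $\mathbf{G}$ from the second identity via the full-row-rank hypothesis on $\mathbf{X}$, and substitute into the first to isolate $\tilde{\mathbf{A}}$. Your explicit observation that the final cancellation additionally requires $\mathbf{G}$ to have rank $N$ (hence $F \geq N$) is a condition the paper's proof leaves implicit, and is worth retaining.
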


\begin{proof}
    See Appendix \ref{prop3:proof}.
\end{proof}
Therefore if the attacker knows the nodal feature matrix $\mathbf{X}$ along with the gradients, it can reconstruct the normalized adjacency matrix $\tilde{\mathbf{A}}$ analytically. \\
Even though it is not possible to analytically demonstrate the reconstruction for Graph Attacker-N and Graph Attacker-GN, we do conduct experiments for all the threat models for both GraphSAGE and GCN.

\begin{table*}[t]
\centering
\caption{Target node feature reconstruction $\mathbf{x}_v$ (RNMSE) for Node Attacker 1.}
\label{node1}
\begin{tabular}{lllllll} 
\toprule
                &                                                   & Synthetic                                                    & FB                                                              & GitHub                                                        &  OGBN-Arxiv \\ 
\midrule
\multicolumn{6}{l}{\textbf{GLG (Ours)}} \\[0.5ex]
GraphSAGE ($\times 10^{-3}$) 
                & \begin{tabular}[c]{@{}l@{}}Mean\\Min\end{tabular} 
                & \begin{tabular}[c]{@{}l@{}}2.8$\pm$2.5\\0.75\end{tabular}   
                & \begin{tabular}[c]{@{}l@{}}2.6$\pm$1.3\\0.6\end{tabular} 
                & \begin{tabular}[c]{@{}l@{}}2.8$\pm$1.4\\0.9\end{tabular}     
                &   \begin{tabular}[c]{@{}l@{}}4.4$\pm$3.5\\1.53\end{tabular} \\[2ex]
GCN                        
                & \begin{tabular}[c]{@{}l@{}}Mean\\Min\end{tabular} 
                & \begin{tabular}[c]{@{}l@{}}0.54$\pm$0.35\\0.23\end{tabular} 
                & \begin{tabular}[c]{@{}l@{}}0.81$\pm$0.61\\0.22\end{tabular}   
                & \begin{tabular}[c]{@{}l@{}}1.27$\pm$0.79\\0.18\end{tabular} 
                &   \begin{tabular}[c]{@{}l@{}}4.28$\pm$0.49\\3.25\end{tabular} \\
\midrule
\multicolumn{6}{l}{\textbf{DLG}} \\[0.5ex]
GraphSAGE 
                & \begin{tabular}[c]{@{}l@{}}Mean\\Min\end{tabular} 
                & \begin{tabular}[c]{@{}l@{}}0.113$\pm$0.07\\0.04\end{tabular}   
                & \begin{tabular}[c]{@{}l@{}}0.03$\pm$0.06\\0.001\end{tabular} 
                & \begin{tabular}[c]{@{}l@{}}0.001$\pm$0.0012\\0.0009\end{tabular}     
                &   \begin{tabular}[c]{@{}l@{}}4.57$\pm$0.37\\3.92\end{tabular}\\[2ex]
GCN                        
                & \begin{tabular}[c]{@{}l@{}}Mean\\Min\end{tabular} 
                & \begin{tabular}[c]{@{}l@{}}3.21$\pm$2.56\\1.22\end{tabular} 
                & \begin{tabular}[c]{@{}l@{}}1.34$\pm$1.66\\0.14\end{tabular}   
                & \begin{tabular}[c]{@{}l@{}}7.01$\pm$5.49\\1.51\end{tabular} 
                &  \begin{tabular}[c]{@{}l@{}}4.98$\pm$1.11\\3.64\end{tabular} \\
\bottomrule
\end{tabular}
\end{table*}

\section{Data \& Experiments} \label{experiments}
The effectiveness of the proposed algorithms is evaluated on real-world social network datasets, paper citation network datasets, molecular datasets, and synthetic datasets. The detailed results for molecular datasets are provided separately in Appendix \ref{tudata}.

{

\paragraph{Evaluation Metrics.} \label{sec:metric}
{
Error metrics for reconstructing the nodal features and the graph structure are evaluated, along with their standard deviation.
\begin{itemize}
    \item \textbf{Nodal Features}: To evaluate the performance of the reconstructed node features,  the Root Normalised Mean Squared Error(RNMSE)  is used 
    \begin{align}
        \text{RNMSE}(\mathbf{x}_v,\hat{\mathbf{x}}_v) = \frac{||\mathbf{x}_v - \hat{\mathbf{x}}_v||}{||\mathbf{x}_v||}.
    \end{align}In order to evaluate the performance of reconstructing $\mathbf{X}$ in a graph classification setting, we report the mean RNMSE over all the nodal features in $\mathbf{X}$.
    \item {\textbf{Graph Structure}}: To evaluate the reconstruction performance of the binary adjacency matrix, we use the following metrics
    \begin{itemize}
        \item \textbf{Accuracy}:  defined as $\frac{\sum_{i,j}(\hat{\mathbf{A}}_{ij}-\mathbf{A}_{ij})}{N*N}$.
        \item \textbf{AUC} (Area Under Curve): The area under the Reciever Operator Characteristic Curve.
        \item \textbf{AP} (Average Precision): is defined as $\frac{TP}{TP+FP}$ where $TP$ stands for number of True Positives and $FP$ stands for False Positives. True Positives are the correctly identified edges in the actual graph by the attacker and False Positives are the non-edges incorrectly classified as edges by the attacker.
    \end{itemize}
    
\end{itemize}
\paragraph{Experimental Settings.}
Algorithms \ref{alg:algo1}, \ref{algo3} and \ref{alg:algo2} in Appendix \ref{attack_algo} solve an optimization problem using gradient descent. The gradient descent step can be replaced with off-the-shelf optimizers such as Adam or L-BFGS. In all the experiments, Adam is used as the optimizer. In an attack setting, allowing multiple restarts to the optimizer (especially L-BFGS) from different starting points can significantly increase the quality of reconstruction.
For a fair comparison,  multiple restarts are not used, and all results presented are obtained with a single run of the optimizer. However, in an actual attack setting, an attacker may be able to \textit{greatly improve} the reconstruction quality by allowing multiple restarts.

In the experiments, a 2-layer GNN  model with hidden dimension $100$ and a sigmoid activation function is employed. The weights of the model are randomly initialized. For all the experiments, the dummy nodal features are initialized randomly, with each entry sampled from the standard normal distribution i.e., $\mathcal{N}(0,1)$. The dummy adjacency matrix is initialized by randomly setting its entries to $0$ or $1$. The hyperparameters values for the feature smoothness and the sparsity regularizers are set to $\alpha=10^{-9}$ and $\beta=10^{-7}$. These are the values selected by grid search for the choice that leads to the best performance.
\begin{itemize}[itemsep=0pt,topsep=0pt,parsep=0pt]
    \item \textbf{Node Attacker 1} uses the cosine loss objective given in \eqref{cosineloss} without  regularization.
    \item \textbf{Node Attacker 2} uses the objective function is defined in \eqref{regularized}.
    \item \textbf{Graph Attacker} uses the objective function as defined in \eqref{regularized}.
\end{itemize}
We adopt the DLG attack as our baseline and evaluate it across all three settings. A comprehensive comparison with GRAIN is provided in Appendix \ref{sec:comparison}.
We also extend the experiments involving Node Attacker 1 and Graph Attacker-N to the minibatch setting; these results are detailed in Appendix \ref{sec:attack_batch_appendix}. As the attack involves numerous hyperparameters, we further analyze a subset of these hyperparameters under various settings, with results presented in Appendix \ref{sec:sensitive_analysis_appendix}.
All reported results represent averages taken over $20$ independent runs of each attack.

\paragraph{Datasets.}
We consider the following three datasets: 
\begin{itemize} [itemsep=0pt,topsep=0pt,parsep=0pt]
    \item \textbf{GitHub} \citep{github}: Nodes represent developers on GitHub and edges are mutual follower relationships. The nodal features are extracted based on the location, repositories starred, employer and e-mail address. Binary labels indicate the user's job title, either web developer or machine learning developer. The details regarding the number of nodes and edges in this dataset are presented in Table \ref{tab:data1}.
    \item \textbf{FacebookPagePage (FB)} \citep{fbpagepage}: A page-page graph of verified Facebook sites. Nodes correspond to official Facebook pages, links represent mutual likes between sites. Node features are extracted from the site descriptions. The labels denote the categories of the sites.  
    The details regarding the number of nodes and edges in this dataset are presented in Table \ref{tab:data1}.
    \item \textbf{OGBN-Arxiv} \citep{hu2020open}: A large-scale citation network dataset from the Open Graph Benchmark (OGB), representing academic papers from the arXiv repository. Each node corresponds to a paper, and directed edges represent citation links between them. Node features are based on the paper's title and abstract. The task is node classification: predicting the primary subject area of each paper. As the setting of OGBN-Arxiv aligns with the Node Attacker 1, we evaluate only Node Attacker 1 on this dataset.
    \item \textbf{Synthetic}: A synthetic dataset containing undirected random graphs with an average degree of 4 and average number of nodes set to 50. The node features are generated by sampling from the standard normal distribution i.e. $\mathcal{N}(0,1)$. Node labels are generated by uniformly sampling an integer between zero and the number of classes specified. Edges are generated by uniformly sampling its two endpoints from the node set. 
    The total number of edges are set to $\frac{d*n}{2}$.
\end{itemize}
Note that since {Node Attacker 2} and Graph Attacker aim at attacking the subgraphs. We randomly sample a node and its 3-hop  neighborhood as the subgraph to be attacked. 
This setting is also consistent with the Federated Graph Learning setting where each client tends to store a subgraph instead of the whole graph.

\subsection{Node Attacker 1}

}

In this section, we evaluated the performance of feature reconstruction of {Node Attacker 1 for the node classification task.}

{

\paragraph{Target node feature reconstruction.}


In this experiment, the attacker has no prior knowledge of the graph dataset. Consequently, a 2-layer tree is generated as a dummy graph, with the target node as the root node and each node having degree $d_{tree} = 10$. Table~\ref{node1} reports the RNMSE for reconstructing the target node features, where the attack is performed on 20 randomly selected nodes from each dataset. It can be observed that, for GraphSAGE, the attacker achieves high accuracy in reconstructing the nodal features, whereas for GCN, the attacker struggles to reconstruct the target node features reliably. These observations align with the theoretical insights presented in Propositions~\ref{prop:prop1} and~\ref{prop:prop2}. Furthermore, Table~\ref{node1} also shows the minimum RNMSE achieved among the 20 nodes. While the mean RNMSE for GCN may be large, the presence of smaller minimum RNMSE values indicates that leakage of the nodal features can still occur in practice.

A closer examination of our proposed method, GLG, reveals that it generally attains low RNMSE values for GraphSAGE across the Synthetic, FB, OGBN-Arxiv, and GitHub datasets. For example, on Synthetic and FB, GLG reports mean RNMSE values on the order of $10^{-3}$, indicating a near-exact reconstruction of the target node features. Although the mean RNMSE for GitHub is slightly higher, GLG remains highly effective at recovering node features when GraphSAGE is used. Under the GCN framework, while GLG yields larger RNMSE values overall, its minimum RNMSE can still be as low as $0.18$ on GitHub. This finding underscores that, even in scenarios where GCN poses a more challenging reconstruction problem, GLG can still reveal critical information about certain nodes.

In comparing GLG and DLG, the results highlight the \emph{superiority of GLG} in most cases. Under the GraphSAGE framework, GLG exhibits significantly lower mean RNMSE on both Synthetic and FB, thereby demonstrating more accurate feature reconstruction. Although DLG achieves a marginally better result on GitHub, this exception does not overshadow the broader trend of GLG's robust performance. Moreover, under the GCN framework, GLG consistently outperforms DLG on all tested datasets.

\begin{table*}[t]
\caption{Reconstruction of $\mathbf{A}$ for Node Attacker 2–G.}
\vskip -0.05in
\centering
\newcommand{\threecol}[1]{\multicolumn{3}{c}{#1}}
\renewcommand{\arraystretch}{1.0}

\begin{tabular}{@{}l c ccc c ccc@{}}
\toprule
\multirow{2}{*}{Framework} & \threecol{Facebook} & & \threecol{GitHub} \\
\cmidrule(lr){2-4}\cmidrule(lr){6-8}
& ACC & AUC & AP & & ACC & AUC & AP \\
\midrule
\multicolumn{8}{l}{\textbf{GLG (Ours)}}\\[2pt]
GraphSAGE & 1.00$\pm$0.00 & 1.00$\pm$0.00 & 1.00$\pm$0.00 && 1.00$\pm$0.00 & 1.00$\pm$0.00 & 1.00$\pm$0.00\\
GCN        & 0.97$\pm$0.08 & 0.97$\pm$0.02 & 0.85$\pm$0.04 && 0.97$\pm$0.02 & 0.98$\pm$0.001 & 0.87$\pm$0.02\\[6pt]

\multicolumn{8}{l}{\textbf{DLG}}\\[2pt]
GraphSAGE & 0.85$\pm$0.14 & 0.90$\pm$ 0.09 & 0.65$\pm$0.28 && 0.83$\pm$0.16 & 0.87$\pm$0.16 & 0.60$\pm$0.27 \\
GCN        & 0.74$\pm$0.25 & 0.74$\pm$0.26 & 0.54$\pm$0.20 && 0.67$\pm$0.28 & 0.71$\pm$0.27 & 0.47$\pm$0.27\\
\bottomrule
\end{tabular}
\label{node2a}
\end{table*}

\begin{table}[t]
\centering
\caption{Reconstruction of $\mathbf{X}$ for Node Attacker 2–N.}
\small
\setlength{\tabcolsep}{4pt}
\renewcommand{\arraystretch}{1.05}

\begin{tabular}{@{}lcc@{}}
\toprule
\textbf{Framework} & \textbf{Facebook} & \textbf{GitHub} \\
\midrule
\multicolumn{3}{l}{\textbf{GLG (Ours)}}\\
GraphSAGE ($\times 10^{-4}$) & 0.7$\pm$0.6  & 0.9$\pm$0.9 \\
GCN                          & 0.07$\pm$0.01 & 0.09$\pm$0.02\\[6pt]

\multicolumn{3}{l}{\textbf{DLG}}\\
GraphSAGE($\times 10^{-3}$) & 0.1$\pm$0.0 & 0.06$\pm$0.03\\
GCN                          & 0.31$\pm$0.19 & 0.29$\pm$0.24\\
\bottomrule
\end{tabular}

\label{node2b}
\end{table}
Detailed results evaluating the performance of node attacker 1 in reconstructing the nodal features of \textbf{one-hop neighbors} are presented in Appendix \ref{sec:one_hop_appendix}.


\begin{table}[t]
\centering
\caption{Reconstruction of $\mathbf{X}$ (RNMSE) and $\mathbf{A}$ (AUC and AP) with Node Attacker 2-GN, comparing GLG and DLG.}
\label{tab:recovery_comparison}
\begin{tabular}{@{}l c ccc c ccc@{}} 
\toprule
& \multicolumn{3}{c|}{\textbf{Facebook}} & \multicolumn{3}{c}{\textbf{GitHub}} \\
\cmidrule(lr){2-4} \cmidrule(lr){5-7}
\textbf{Framework} & \(\mathbf{X}\) (RNMSE) & AUC\(\uparrow\) & AP\(\uparrow\) & \(\mathbf{X}\) (RNMSE) & AUC\(\uparrow\) & AP\(\uparrow\) \\
\midrule
\multicolumn{7}{l}{\textbf{GLG (Ours)}} \\
\midrule
GraphSAGE 
  & 0.0024\(\pm\)0.003 & 0.99\(\pm\)0.01 & 0.99\(\pm\)0.01 
  & 0.0024\(\pm\)0.003 & 0.99\(\pm\)0.01 & 0.99\(\pm\)0.01 \\
GCN 
  & 1.39\(\pm\)0.49 & 0.97\(\pm\)0.03 & 0.75\(\pm\)0.10
  & 0.58\(\pm\)0.10 & 0.99\(\pm\)0.04 & 0.83\(\pm\)0.04 \\
GCN (w/o reg.) 
  & 1.19\(\pm\)1.08 & 0.95\(\pm\)0.01 & 0.65\(\pm\)0.12 
  & 0.81\(\pm\)1.41 & 0.94\(\pm\)0.09 & 0.72\(\pm\)0.20 \\
\midrule
\multicolumn{7}{l}{\textbf{DLG}} \\
\midrule
GraphSAGE 
  & 0.003\(\pm\)0.0 & 0.99\(\pm\)0.06 & 0.99\(\pm\)0.02 
  & 0.0026\(\pm\)0.03& 0.98\(\pm\)0.02 & 0.99\(\pm\)0.1 \\
GCN 
  & 1.01\(\pm\)0.44 & 0.68\(\pm\)0.33 & 0.52\(\pm\)0.38
  & 0.88\(\pm\)0.46 & 0.87\(\pm\)0.05 & 0.58\(\pm\)0.18 \\
\bottomrule
\end{tabular}
\label{result:node2c}
\end{table}




\subsection{Node Attacker 2}
In this section, we evaluate the performance of Node Attacker 2 which has access to the gradient of a node classification model for all nodes in a subgraph.
\paragraph{Node Attacker 2-G.}
The reconstruction performance of the adjacency matrix for this setting is listed in Table \ref{node2a}. Note that the attacker \textbf{perfectly} reconstructs the graph structure for the GraphSAGE model. For GCN, even though Proposition \ref{prop:prop4} states that we can exactly reconstruct the adjacency matrix, we do observe some error in the reconstruction. This can be potentially attributed to either the limited number of iterations employed in the attack, hindering its convergence, or the optimization procedure encountering a local minimum. Nonetheless, even though not a perfect reconstruction, the attacker can reconstruct the graph structure with high accuracy and precision for GCN. GLG consistently outperforms DLG for both GraphSAGE and GCN, demonstrating its overall superiority in reconstructing the adjacency matrix.

\paragraph{Node Attacker 2-N.}
Table \ref{node2b} shows the results for Node Attacker 2-N. The attacker is capable of obtaining almost \textbf{perfect} reconstruction of nodal features for the GraphSAGE model. Similarly, for the GCN model, the RNMSE values, although slightly higher, remain sufficiently low, which indicates data leakage. Note that the small error can again be attributed to the potential algorithmic issues such as local optimal as discussed before. Compared to DLG, GLG consistently performs better in reconstructing nodal features for GCN. While DLG achieves slightly better results on the GitHub dataset with GraphSAGE, like Node Attacker 1, this isolated exception does not detract from the overall trend of GLG’s reliable performance across settings.

\paragraph{Node Attacker 2-GN.}
In this experimental setting, the attacker possesses no prior information regarding the graph data and endeavors to reconstruct both the nodal features and the graph structure. Table~\ref{result:node2c} summarizes the corresponding results. For the GraphSAGE model, it is evident that the adversary can reconstruct both the features and the adjacency matrix with high accuracy on both the Facebook and GitHub datasets. These outcomes are consistent with our theoretical findings presented in Proposition~\ref{prop:prop5}. Notably, for the GCN framework, the adversary is still able to reconstruct $\mathbf{A}$ with high AUC and AP scores, despite the absence of a theoretical guarantee for node attacker 2-GN in the GCN context (as noted in Section~\ref{theory}). This empirical observation may be attributed to the integration of the feature smoothness and sparsity regularizers defined in Equation~\eqref{regularized}.

To further evaluate the impact of the regularizers, we assessed the performance of the attack without their inclusion. The corresponding results, also presented in Table~\ref{result:node2c}, indicate a marked decline in the reconstruction performance of the adjacency matrix, particularly in terms of Average Precision (AP), when the regularizers are omitted. This suggests that the inclusion of these regularizers effectively enhances the performance of the attack.

Furthermore, we compare the performance of DLG with our proposed method, GLG. Our empirical results demonstrate that DLG is capable of reconstructing both the nodal features and the graph structure only within the GraphSAGE framework, in accordance with Proposition~\ref{prop:prop5}. In contrast, for the GCN model on both the Facebook and GitHub datasets, GLG consistently outperforms DLG, particularly with respect to the reconstruction of the adjacency matrix as indicated by the AUC and AP metrics. These findings underscore the broader applicability and superior performance of GLG, which is effective even in scenarios lacking formal theoretical guarantees.

\begin{table*}[t]
\caption{Reconstruction of $\mathbf{A}$ with Graph Attacker‑G.}
\centering
\newcommand{\threecol}[1]{\multicolumn{3}{c}{#1}}
\renewcommand{\arraystretch}{1.0}

\begin{tabular}{@{}l c cccc ccc@{}}  
\toprule
\multirow{2}{*}{Framework} & \threecol{Facebook} & & \threecol{GitHub} \\
\cmidrule(lr){2-4}\cmidrule(lr){6-8}
& ACC & AUC & AP & & ACC & AUC & AP \\
\midrule
\multicolumn{8}{l}{\textbf{GLG (Ours)}}\\
GraphSAGE & 1.00$\pm$0.00 & 1.00$\pm$0.00 & 1.00$\pm$0.00 && 1.00$\pm$0.00 & 1.00$\pm$0.00 & 1.00$\pm$0.00\\
GCN        & 0.97$\pm$0.03 & 0.93$\pm$0.05 & 0.97$\pm$0.02 && 0.97$\pm$0.02 & 0.95$\pm$0.04 & 0.97$\pm$0.02\\[6pt]

\multicolumn{8}{l}{\textbf{DLG}}\\
GraphSAGE & 0.50$\pm$0.01 & 0.50$\pm$0.03 & 0.13$\pm$0.05 && 0.50$\pm$0.04 & 0.51$\pm$0.02 & 0.15$\pm$0.04\\
GCN        & 0.51$\pm$0.01 & 0.51$\pm$0.02 & 0.17$\pm$0.07 && 0.53$\pm$0.01  & 0.53$\pm$0.02 & 0.15$\pm$0.04\\
\bottomrule
\end{tabular}
\label{grapha}
\end{table*}

\begin{table}[t]
\centering
\caption{Reconstruction of $\mathbf{X}$ (RNMSE) with Graph Attacker‑N.}
\small
\setlength{\tabcolsep}{4pt}
\renewcommand{\arraystretch}{1.05}

\begin{tabular}{@{}lcc@{}}
\toprule
\textbf{Framework} & \textbf{Facebook} & \textbf{GitHub} \\
\midrule
\multicolumn{3}{l}{\textbf{GLG (Ours)}}\\
GraphSAGE & 0.18$\pm$0.08 & 0.15$\pm$0.08 \\
GCN       & 0.61$\pm$0.08 & 0.59$\pm$0.08 \\[6pt]

\multicolumn{3}{l}{\textbf{DLG}}\\
GraphSAGE & 0.77$\pm$0.65 & 0.20$\pm$0.4 \\
GCN       & 1.32$\pm$0.22 & 0.77$\pm$0.07 \\
\bottomrule
\end{tabular}

\label{graphb}
\end{table}

\subsection{Graph Attacker}

In this subsection, the performance of different graph attackers is evaluated for attacking nodal features as well as the adjacency matrix of the private subgraph.
\paragraph{Graph Attacker-G.}
Table \ref{grapha} lists the reconstruction performance of the graph structure. It can be observed that the attacker can \textbf{perfectly} recover the adjacency matrix from  the gradients of a GraphSAGE model in both datasets. Such results are also consistent with the results of Proposition \ref{prop:prop3}. On the other hand, in the case of GCN, despite the absence of theoretical guarantees, the attacker demonstrates high accuracy, precision, and AUC values in identifying the edges. Such results can again be attributed to the feature smoothness and sparsity regularizer which help leverage usual properties for social networks. 
Unlike the Node attacker setting, DLG performs poorly in reconstructing the graph structure under the graph attacker setting, highlighting its limitations in directly handling graph-structured data.
To assess the impact of the regularizer, we again conduct the attack with and without the regularizer as shown in equations \eqref{cosineloss} and \eqref{regularized}. The resulting recovered adjacency matrices are visualized through heatmaps as depicted in Figures \ref{figreg} and \ref{figunreg}. Figure \ref{figorig} illustrates the true graph structure of the private data under attack for comparison. The difference in reconstruction quality between the two scenarios is evident. When the regularizer is not used, the attack accurately identifies a substantial number of edges. However, this also leads to the erroneous identification of many non-edges as edges, resulting in a lower AP score as also shown in the Figure \ref{heatmap}.
\begin{figure*}[ht]
	\centering
\captionsetup[subfigure]{justification=centering}
\vspace{-3mm}
\centering
		\subfloat[]
		{
            \includegraphics[scale=0.25]{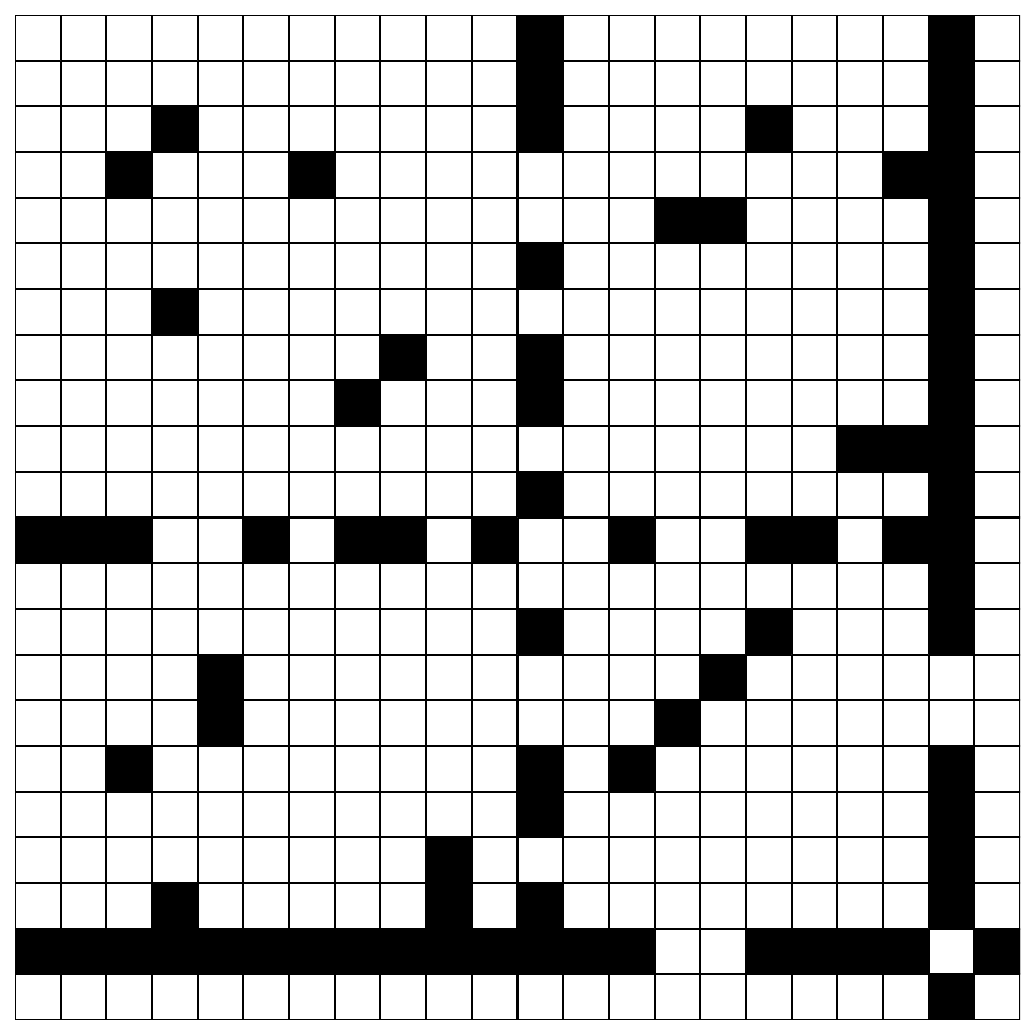}
            \label{figorig}
        }
			\hspace{5mm}
		\subfloat[]
		{\includegraphics[scale=0.25]{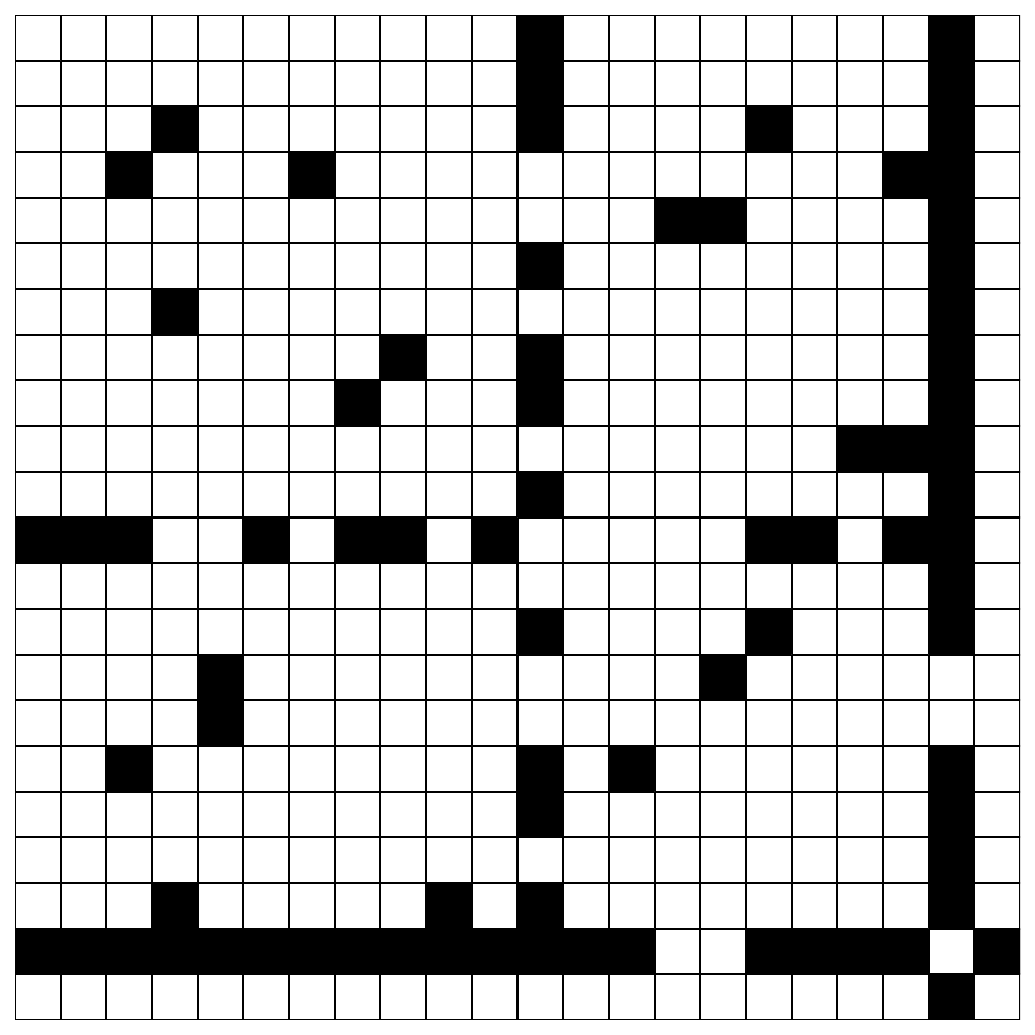}
        \label{figreg}
        }
			\hspace{5mm}
		\subfloat[]
		{\includegraphics[scale=0.25]{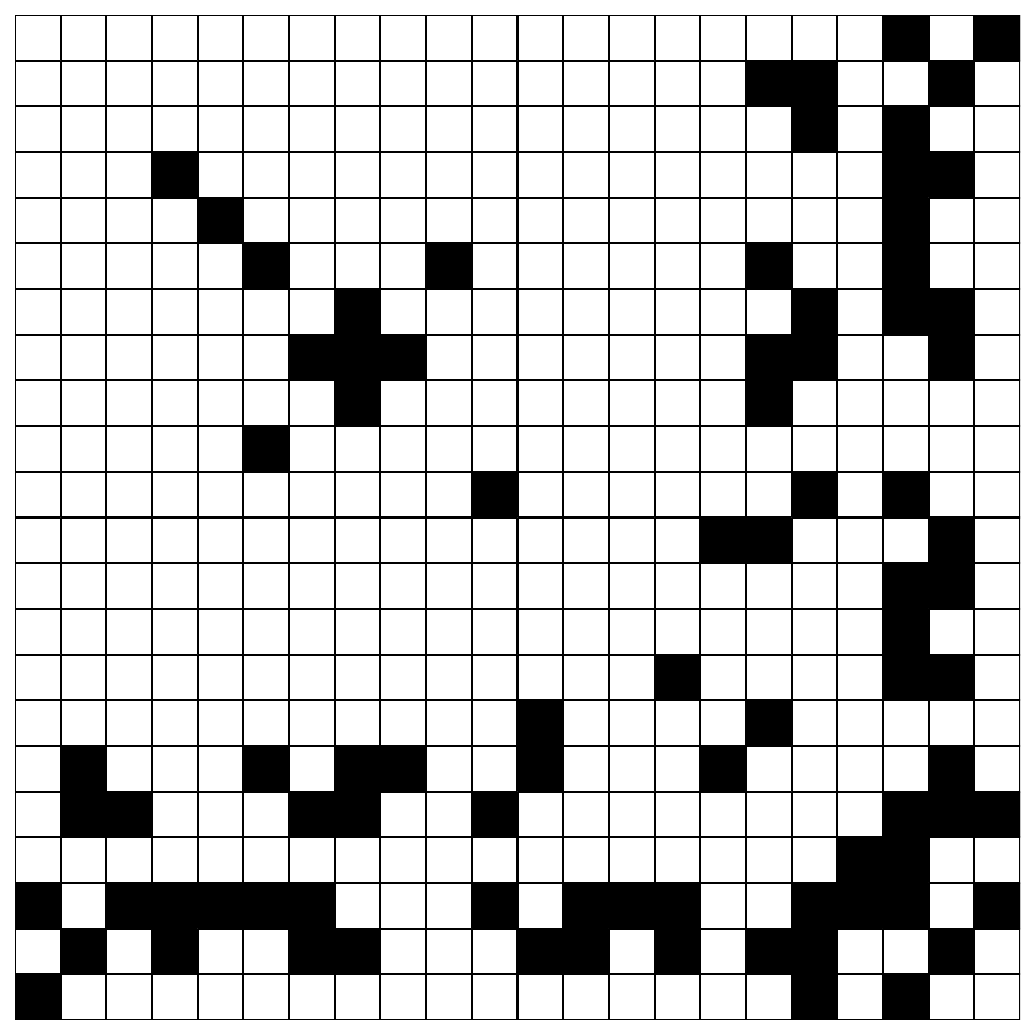}	\label{figunreg}}
	\caption{The heatmaps of the reconstructed adjacency matrix with Graph Attacker-G from the gradient of GCN models. (a) actual graph. (b) reconstructed graph structure with regularizers (AUC=$0.981$, AP=$1.0$). (c) reconstructed graph structure without regularizers (AUC=$0.977$, AP=$0.86$).
	}
	\label{heatmap}
\end{figure*}
\vspace{-5mm}
\paragraph{Graph Attacker-N.}
In this setting, the attacker tries to reconstruct the nodal feature matrix given the gradients and the underlying graph structure. The results are listed in Table \ref{graphb}. It can be observed that while the attacker is unable to reconstruct nodal features from a GCN model, it can recover nodal features from a GraphSAGE model with small errors. 
The results also corroborate that this attacker is different from Node Attacker 2-N where the gradients of the loss for each node are leveraged to the attacker. Unlike this scenario, Node Attacker 2-N can successfully reconstruct the nodal features. 
While GLG performs better on GraphSAGE than on GCN, it still consistently outperforms DLG on both architectures, suggesting that DLG struggles to directly handle graph-structured data well due to interdependencies between nodal features and graph topology.

\paragraph{Graph Attacker-GN.}
Table~\ref{graphc} reports the performance of the Graph Attacker-GN in reconstructing the graph structure and nodal features. An analysis of the results for our proposed method, GLG, reveals that the reconstructed adjacency matrix is obtained with high accuracy for the GraphSAGE framework. Interestingly, in this experimental setting, the attack on the GCN framework yields even better performance in recovering the graph structure, as compared to GraphSAGE. It is worth noting, however, that in both frameworks the reconstruction of the nodal features remains unsatisfactory, highlighting an intrinsic difficulty in recovering node attributes under this attack paradigm.

A detailed comparison between GLG and the baseline method, DLG, further underscores the superiority of GLG. In the GraphSAGE setting, while DLG is able to some success reconstruction of the graph structure (for Github), its performance is still far inferior to that of GLG in terms of key metrics such as RNMSE, AUC, and AP. Moreover, even under the GCN framework, GLG exceeds the performance of DLG, despite the absence of theoretical guarantees for nodal feature reconstruction. These empirical findings demonstrate that GLG is more robust and effective in adversarial graph reconstruction, providing enhanced accuracy in reconstructing the adjacency matrix even in challenging scenarios.

Overall, the results clearly establish that GLG is a more reliable and superior approach compared to DLG, as it consistently achieves higher reconstruction accuracy across different graph models and datasets.

\begin{table*}[t]
\caption{Reconstruction of $\mathbf{X}$ (RNMSE) and $\mathbf{A}$ (AUC and AP) with Graph Attacker-GN for two different methodologies on Facebook and GitHub datasets.}
\vskip -0.05in
\centering
\renewcommand{\arraystretch}{1.0}
\begin{tabular}{@{}l ccc ccc@{}}
    \toprule
    \multirow{2}{*}{\textbf{Framework}} & \multicolumn{3}{c}{\textbf{Facebook}} & \multicolumn{3}{c}{\textbf{GitHub}} \\
    \cmidrule(lr){2-4} \cmidrule(lr){5-7}
    & $\mathbf{X} \downarrow$ & AUC $\uparrow$ & AP $\uparrow$ & $\mathbf{X} \downarrow$ & AUC $\uparrow$ & AP $\uparrow$ \\
    \midrule
    \multicolumn{7}{l}{\textbf{Ours}} \\
    GraphSAGE & 0.58$\pm$0.10 & 0.89$\pm$0.06 & 0.98$\pm$0.01 & 0.62$\pm$0.09 & 0.92$\pm$0.05 & 0.98$\pm$0.02 \\ 
    GCN       & 0.60$\pm$0.09 & 0.92$\pm$0.03 & 0.97$\pm$0.01 & 0.61$\pm$0.03 & 0.93$\pm$0.01 & 0.98$\pm$0.009 \\ 
    \midrule
    \multicolumn{7}{l}{\textbf{DLG}} \\
    GraphSAGE & 1.39$\pm$0.01 & 0.50$\pm$0.02 & 0.20$\pm$0.08 & 0.85$\pm$0.73 & 0.71$\pm$0.25 & 0.48$\pm$ 0.46 \\
    GCN       & 1.31$\pm$0.05 & 0.58$\pm$0.16 & 0.24$\pm$0.14 & 0.78$\pm$0.06 & 0.88$\pm$0.05 & 0.70$\pm$0.08 \\
    \bottomrule
\end{tabular}
\label{graphc}
\vskip -0.05in
\end{table*}

{

\section{Conclusion \& Future Work} \label{sec:conclusion&future}
In this work, we study the gradient inversion attack for Graph Neural Networks (GNNs) in a federated graph learning setting and explore what information can be leaked about the graph data from the gradient. We studied both the node classification and the graph classification tasks, along with two widely used GNN structures, GraphSAGE and GCN. Through both theoretical and empirical results, we highlighted and analyzed the vulnerabilities of different GNN frameworks across a wide variety of gradient inversion attack settings. Our proposed method, GLG, attains superior accuracy in reconstructing both nodal features and graph structure from gradients.

\paragraph{Future work.}
In the future, it would be interesting to study the extension of the attack mechanisms to other GNN frameworks, such as GIN (Graph Isomorphism Networks) \citep{gin} and GAT (Graph Attention Networks) \citep{gat}. Experiments and theoretical analysis regarding the vulnerability of other GNN frameworks hence is a promising direction. For example, are certain GNN frameworks more robust to gradient inversion attacks than others? Also, are there any other special properties that can be utilized for graph datasets other than social networks? For instance, it is worth exploiting the fact that the degrees of nodes in molecular datasets exhibit an upper limit. Finally, recent studies such as \citet{provable} have provided theoretical bounds on the reconstruction quality of minibatch data. Hence, an intriguing question would be whether similar guarantees can be established for graph data.

Furthermore, it is crucial to have appropriate defense mechanisms in place to prevent data leakage. 
It is commonly believed that employing larger batch sizes could be an effective defense against the gradient inversion attack, but our experiments show that even in batches of size $50$, the private data can be reconstructed with high accuracy. Therefore, solely relying on larger batch sizes is insufficient as a defense strategy against gradient inversion attacks.
Recent studies such as \citet{fishing, provable} have provided provable guarantees regarding the quality of reconstructed private data from training batches.
Based on the various defense strategies introduced in \citet{gupta2022recovering}, some potential defense mechanisms can remain effective in graph setting: e.g., noisy gradients \citep{dpsgd}, gradient pruning \citep{dlg}, encoding inputs \citep{instah}. However, more specific defensive mechanisms from graph perspectives remain undeveloped.

\subsubsection*{Acknowledgments}
Work in the paper is supported by NSF ECCS 2412484, ECCS 2207457, NSF GEO CI 2425748, and NSF SaTC Frontiers FG22920.


\bibliographystyle{tmlr}
\bibliography{custom.bib}
\clearpage
\appendix

\section{Attack Algorithms} \label{attack_algo}

\begin{algorithm}[h] 
    \caption{GLG (Node Attacker).}
    \begin{algorithmic}[1]
    
  \STATE  \textbf{Input:} 
     $2$-layer GNN model $F(\mathbf{x}_{v}, \{\mathbf{x}_{v_j}\}_{j \in \mathcal{N}_v}, \mathbf{W})$\\
     $\nabla_{\mathbf{W}}\mathcal{L}_v$: gradients calculated by training data, $\mathbf{y}_v$: True label
\STATE \textbf{Output:} private data $\mathbf{x}_{v}$ 
    
    \STATE $//$Initialize dummy target and neighboring node features by creating a two level tree with degree as $d_{tree}$ and the label:
        \STATE Initialize $\hat{\mathbf{x}}_{v}^1$ and $\{\hat{\mathbf{x}}_{v_j}\}^1_{j \in \mathcal{N}_v}$ 
        \FOR{$p \gets 1 \textrm{ to } P$}
        \STATE $//$Compute dummy gradients:
                \STATE $\nabla_{\mathbf{W}}\hat{\mathcal{L}}_v = \partial \mathcal{L}(F(\hat{\mathbf{x}}^p_{v},\{\hat{\mathbf{x}}^p_{v_j}\}_{j \in \mathcal{N}_v},\mathbf{W}), \mathbf{y}_{v})\partial \mathbf{W}$
                \STATE Compute $\mathbb{D}^p$ as in \eqref{cosineloss}
                \STATE 
                    $\hat{\mathbf{x}}_{v}^{p+1} = \hat{\mathbf{x}}_{v}^p - \eta \nabla_{\hat{\mathbf{x}}_{v}^p} \mathbb{D}^{p},$
                \FOR {$j \in \mathcal{N}_v$}
                    \STATE $\hat{\mathbf{x}}_{v_j}^{p+1} = \hat{\mathbf{x}}_{v_j}^p - \eta \nabla_{\hat{\mathbf{x}}^p_{v_j}}\mathbb{D}^{p},$
                \ENDFOR
        \ENDFOR
        \STATE \textbf{return} $\hat{\mathbf{x}}=\hat{\mathbf{x}}_{v}^{P+1}$
    \end{algorithmic}
        \label{alg:algo1}
\end{algorithm}
\begin{algorithm}[h]
    \caption{GLG (Node Attacker 2)}\label{algo3}
    \begin{algorithmic}[1]
    \STATE \textbf{Input:} 
     GNN model {$F(\mathbf{X}, \mathbf{A}, \mathbf{W})$}, $\mathbf{W}$: parameter weights; $\nabla_{\mathbf{W}}\mathcal{L}$ : gradients calculated by private training data for each node in the graph, $\mathbf{y}$: True label
     \STATE \textbf{Output:} private training data 
     $\mathbf{X}, \mathbf{A}$
    
    \STATE Initialize dummy features and labels $\hat{\mathbf{X}}^{1}$, $\hat{\mathbf{A}}^{1}$ 
        \FOR{$p \gets 1 \textrm{ to } P$}
        \STATE $//$Compute dummy gradients:
                \STATE $
                    \nabla_{\mathbf{W}}\hat{\mathcal{L}} =
                        \partial \mathcal{L} (F(\hat{\mathbf{X}}^{p},\hat{\mathbf{A}}^{p}, \mathbf{W}), \mathbf{y})
                        /
                        \partial \mathbf{W}$
                \STATE Compute $\hat{\mathbb{D}}^p$ as given in \eqref{regularized}
                \STATE 
                    $\hat{\mathbf{X}}^{p+1}=\hat{\mathbf{X}}^{p} - \eta \nabla_{\hat{\mathbf{X}}^{p}} \hat{\mathbb{D}}^{p},$
                                    \STATE 
                    $\hat{\mathbf{A}}^{p+1}=\text{proj}_{[0,1]}(\hat{\mathbf{A}}^{p} - \eta \nabla_{\hat{\mathbf{A}}^{p}} \hat{\mathbb{D}}^{p}),$
        \ENDFOR
        \STATE \textbf{return} $\hat{\mathbf{X}}=\hat{\mathbf{X}}^{P+1}, \hat{\mathbf{A}}\sim Ber(\mathbf{A}^{P+1}) $
    \end{algorithmic}
\end{algorithm}

\begin{algorithm}[h]
    \caption{Graph Attacker}\label{algorithm:deep_leakage}
    \begin{algorithmic}[1]
    \STATE \textbf{Input:} 
     GNN model {$F(\mathbf{X}, \mathbf{A}, \mathbf{W})$}, $\mathbf{W}$: parameter weights; $\nabla_{\mathbf{W}}\mathcal{L}$: gradients calculated by private training data, $\mathbf{y}$: True label
     \STATE \textbf{Output:} private training data 
     $\mathbf{X}, \mathbf{A}$
    
    \STATE Initialize dummy features and labels $\hat{\mathbf{X}}^{1}$, $\hat{\mathbf{A}}^{1}$ 
        \FOR{$p \gets 1 \textrm{ to } P$}
        \STATE $//$Compute dummy gradients:
                \STATE $
                    \nabla_{\mathbf{W}}\hat{\mathcal{L}} =
                        \partial \mathcal{L} (F(\hat{\mathbf{X}}^{p},\hat{\mathbf{A}}^{p}, \mathbf{W}), \mathbf{y})
                        /
                        \partial \mathbf{W}$
                \STATE Compute $D^p$ as given in \eqref{cosineloss}
                \STATE 
                    $\hat{\mathbf{X}}^{p+1}=\hat{\mathbf{X}}^{p} - \eta \nabla_{\hat{\mathbf{X}}^{p}} \mathbb{D}^{p},$
                                    \STATE 
                    $\hat{\mathbf{A}}^{p+1}=\text{proj}_{[0,1]}(\hat{\mathbf{A}}^{p} - \eta \nabla_{\hat{\mathbf{A}}^{p}} \mathbb{D}^{p}),$
        \ENDFOR
        \STATE \textbf{return} $\hat{\mathbf{X}}=\hat{\mathbf{X}}^{P+1}, \hat{\mathbf{A}}\sim Ber(\mathbf{A}^{P+1})$
    \end{algorithmic}
    \label{alg:algo2}
\end{algorithm}

\section{Extracting ground-truth labels} \label{idlg_proof}
In a classification task, it is possible to extract the ground-truth labels from the gradients as shown in \citet{idlg}. Similary, it is possible to extract the ground-truth labels in a graph/node classification task. We demonstrate how to extract the ground-truth labels when cross-entropy loss is used as defined in \eqref{celoss}. The gradients of the loss with respect to $\mathbf{p}_i$ is given by
\begin{align}
    g_i &= \frac{\partial\mathcal{L}}{\partial \mathbf{p}_i} =
    \begin{cases}
    -1 + \left(\frac{e^{\mathbf{p}_i}}{\sum_je^{\mathbf{p}_j}}\right) & \text{if } i=k, \\
    \left(\frac{e^{\mathbf{p}_i}}{\sum_je^{\mathbf{p}_j}}\right) & \text{else}.
    \end{cases}
\end{align}
Since $\left(\frac{e^{\mathbf{p}_i}}{\sum_je^{\mathbf{p}_j}}\right) \in (0,1)$ therefore $g_i \in (-1,0)$ when $i=k$ and $g_i \in (0,1)$ otherwise. Now consider the node classification task for a GCN layer as defined in \eqref{eqgcnnode}. For the final layer of the model with output as $\mathbf{h}^L_v=\mathbf{p}$ we can write the gradients of the loss as follows
\begin{align}
\nabla_{(\mathbf{W})_i}\mathcal{L} &= \frac{\partial\mathcal{L}}{\partial (\mathbf{W})_i} =\frac{\partial\mathcal{L}}{\partial \mathbf{p}_i}\frac{\partial \mathbf{p}_i}{\partial (\mathbf{W})_i}  \\
&=g_i\cdot\frac{\partial \left(\sigma \left(\mathbf{h}_v^\text{agg}(\mathbf{W})_i^\top+\mathbf{b}_i\right)\right)}{\partial(\mathbf{W})_i} \\
&= g_i\cdot \lambda_i \cdot \mathbf{h}_v^\text{agg}.
\end{align}
As $\mathbf{h}_v^\text{agg}$ is independent of the logit  index $i$, and $\lambda_i \geq 0$ because of the sigmoid non-linearity, the ground-truth label $k$ can be identified by just checking the signs of $\nabla_{(\mathbf{W})_i}\mathcal{L} $ for all $i$. Formally, the ground-truth label $k$ is the one for which the following condition holds
\begin{align}
    k&=i \text{ s.t. } \nabla_{(\mathbf{W})_i}\mathcal{L} \cdot \nabla_{(\mathbf{W})_j}\mathcal{L} \leq 0\text{, }\forall i \neq j.
\end{align}

\section{Proof of Proposition \ref{prop:prop1} for GraphSAGE}
For GraphSAGE layer defined in \eqref{eqsagenode} the gradient equations for the first layer can be written as follows

\begin{align}
\mathbf{h}_v &= \sigma(\tilde{\mathbf{h}}_v ), \label{eq14} \\ 
\tilde{\mathbf{h}}_v &= \mathbf{x}_v^{\text{agg}}\mathbf{W}_1^\top+ \mathbf{x}_v\mathbf{W}_2^\top+\mathbf{b}, \label{eqsage}\\
\nabla_{(\mathbf{W}_1)_i}\mathcal{L}_v &= \nabla_{(\tilde{\mathbf{h}}_v)_i}\mathcal{L}_v\cdot \nabla_{(\mathbf{W}_1)_i}(\tilde{\mathbf{h}}_v)_i, \label{eq16} \\ 
\nabla_{(\mathbf{W}_1)_i}\mathcal{L}_v &= \nabla_{(\mathbf{b})_i}\mathcal{L}_v \cdot \mathbf{x}_v^{\text{agg}} \label{eq17}
\end{align}
\section{Proof of Proposition \ref{prop:prop2}} \label{prop2_proof}
{ For a GraphSAGE layer as shown in \eqref{eqsagenode} the target nodal features can be recovered as the following
\begin{align}
\mathbf{h}_v &= \sigma(\tilde{\mathbf{h}}_v ), \label{eqsage14} \\ 
\tilde{\mathbf{h}}_v &= \mathbf{x}_v^{\text{agg}}\mathbf{W}_1^\top+ \mathbf{x}_v\mathbf{W}_2^\top+\mathbf{b}, \label{eqsage}\\
\nabla_{(\mathbf{W}_2)_i}\mathcal{L}_v &= \nabla_{(\tilde{\mathbf{h}}_v)_i}\mathcal{L}_v \cdot \nabla_{(\mathbf{W}_2)_i}(\tilde{\mathbf{h}}_v)_i, \label{eqsage16} \\ 
\nabla_{(\mathbf{W}_2)_i}\mathcal{L}_v &= \nabla_{(\mathbf{b})_i}\mathcal{L}_v \cdot \mathbf{x}_v. \label{eqsage17}
\end{align}
}
 The target nodal features can henceforth be reconstructed as $\mathbf{x}_v=\nabla_{(\mathbf{W}_2)_i}\mathcal{L}_v/\nabla_{(\mathbf{b})_i}\mathcal{L}_v$ as long as $\nabla_{(\mathbf{b})_i}\mathcal{L}_v \neq 0$.
\section{Proof of Proposition \ref{prop:prop3}} \label{prop3:proof}
\begin{proof}
 For a GraphSAGE framework as shown in \eqref{eqsagegraph}, let $\tilde{\mathbf{H}}$ be the hidden node representation matrix before applying the non-linearity as shown below
\begin{align}
    \mathbf{H} &= \sigma(\tilde{\mathbf{H}}), \\
    \tilde{\mathbf{H}} &= \tilde{\mathbf{A}}\mathbf{X}\mathbf{W}_1^\top + \mathbf{X}\mathbf{W}_2^\top+ \overrightarrow{\mathbf{1}}\mathbf{b}.
\end{align}
The derivatives of the loss function with respect to the weights $\mathbf{W}_1$,$\mathbf{W}_2$, and bias $\mathbf{b}$ 
of the first convolutional layer can be written as
{
\begin{align}
\frac{\partial{\mathcal{L}}}{\partial{(\mathbf{W}_1)_i}} &= \sum_{k=1}^{N}\frac{\partial{\mathcal{L}}}{\partial{\tilde{\mathbf{H}}_{k i}}}\frac{\partial{\tilde{\mathbf{H}}_{k i}}}{\partial{(\mathbf{W}_1)_i}}  = \sum_{k=1}^{N}\frac{\partial{\mathcal{L}}}{\partial{\tilde{\mathbf{H}}_{k i}}}(\tilde{\mathbf{A}}\mathbf{X})_k, \label{deriv1}\\
\frac{\partial{\mathcal{L}}}{\partial{(\mathbf{W}_2)_i}} &= \sum_{k=1}^{N}\frac{\partial{\mathcal{L}}}{\partial{\tilde{\mathbf{H}}_{k i}}}(\mathbf{X})_k,\\
\frac{\partial{\mathcal{L}}}{\partial{(\mathbf{b})_i}} &= \sum_{k=1}^{N}\frac{\partial{\mathcal{L}}}{\partial{\tilde{\mathbf{H}}_{k i}}}, \label{deriv2}
\end{align}
}
where $\partial$ denotes the partial derivative operator. 
Equations \eqref{deriv1}-\eqref{deriv2} can be written in matrix form as the following
 \begin{align}
(\tilde{\mathbf{A}}\mathbf{X})^{\top} \nabla_{\tilde{\mathbf{H}}}\mathcal{L}&= \nabla_{\mathbf{W}_1}\mathcal{L}^\top, \label{eq1}\\
\mathbf{X}^\top \nabla_{\tilde{\mathbf{H}}}\mathcal{L} &= \nabla_{\mathbf{W}_2}\mathcal{L}^\top, \label{eq2}\\
\nabla_{\tilde{\mathbf{H}}}\mathcal{L}^\top\cdot \overrightarrow{\mathbf{1}}&= \nabla_{\mathbf{b}}\mathcal{L}^\top. \label{eq3}
\end{align}
It can be observed from \eqref{eq1}-\eqref{eq2}  that there are two unknowns  $\tilde{\mathbf{A}}$ and $\nabla_{\tilde{\mathbf{H}}}\mathcal{L}$. 
However $\tilde{\mathbf{A}}$ and $\nabla_{\tilde{\mathbf{H}}}\mathcal{L}$ 
can be computed directly 
from the above equations. {
 Given \eqref{eq2} and the fact that $\mathbf{X}$ is full-row rank, $\nabla_{\tilde{\mathbf{H}}} \mathcal{L}$ can be obtained as follows
\begin{align}
\nabla_{\tilde{\mathbf{H}}} \mathcal{L}=(\mathbf{X}^\top)^\dagger\nabla_{\mathbf{W}_2}\mathcal{L}^\top.
\label{eq:HL}
\end{align}

 Plugging the value of $\nabla_{\tilde{\mathbf{H}}} \mathcal{L}$  in \eqref{eq1}, $\tilde{\mathbf{A}}$ can be calculated as 
\begin{align}
\tilde{\mathbf{A}}=\mathbf{X}(\nabla_{\mathbf{W}_2}\mathcal{L})^\dagger(\nabla_{\mathbf{W}_1}\mathcal{L}) \mathbf{X}^\dagger
\end{align}
.} 
\end{proof}

\section{Supplementary Experimental Results and Analysis} \label{sec:experiments_appendix}

\begin{table}[ht]
\vskip -0.15in
\small
\centering
\renewcommand{\arraystretch}{0.95}
\caption{Dataset statistics for Node Classification task.}
\begin{tabular}{lcccccc}
\toprule
\textbf{Dataset} & \textbf{\#Nodes} & \textbf{\#Edges} & 
\textbf{\#Labels} 
\\

\midrule GitHub &  37,300 & 578,006 & 2
\\ 
Facebook  & 22,470 & 342,004 &  4
\\
\bottomrule
\end{tabular}
\label{tab:data1}
\end{table}

\subsection{Node Attacker 1: One-hop Neighborhood Nodal Feature Reconstruction} \label{sec:one_hop_appendix}
To evaluate the performance of the node attacker in reconstructing the nodal features of one-hop neighbors, a target node is selected randomly from the graph. A dummy graph is initialized as a { depth-$2$ tree with $d_{tree}$ set to the actual number of one-hop neighbors of the target node}.  Since there's permutation ambiguity in the reconstructed nodal features, the Hungarian Algorithm is used to find the best match with the original neighboring node features \citep{kuhn}. Figure \ref{onehop} shows the one-hop neighbor reconstruction error (RNMSE) 
using GraphSAGE. The results are averaged over $10$ randomly selected nodes. For some nodes, the nodal features of neighbors were reconstructed accurately. For example, { nodes $0$ and $8$}  in Figure \ref{figfb} for Facebook. Further, although the mean RNMSE of the reconstructed neighbors is high, the minimum RNMSE of the best-reconstructed neighbors can be low (node $9$ in Figure \ref{figfb}), indicating private data leakage.}

\begin{figure*}[ht]
	\centering
\captionsetup[subfigure]{justification=centering}
\vspace{-2mm}
\centering
		\subfloat[Facebook]
		{
            \includegraphics[scale=0.35]{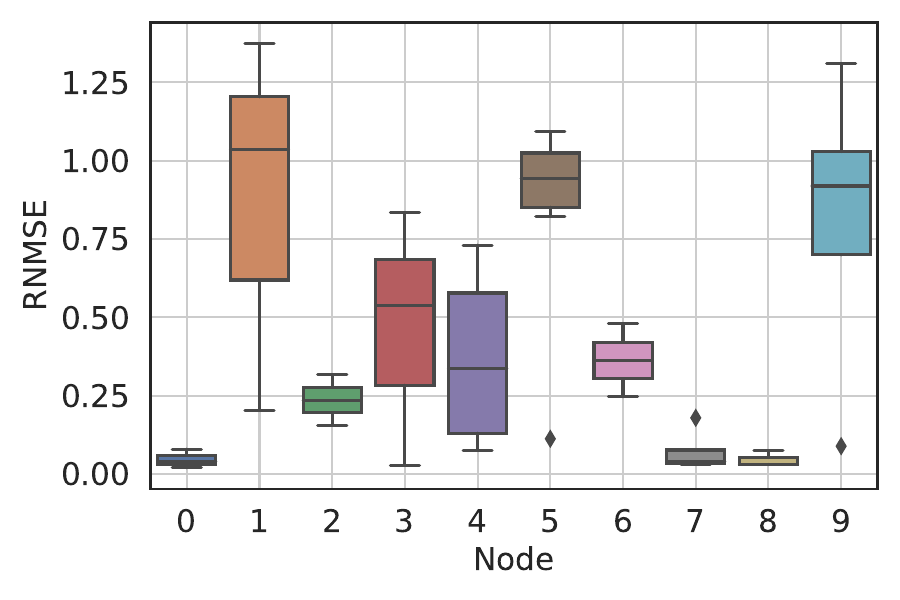}
            \label{figfb}
        }
			\hspace{-2.5mm}
		\subfloat[Synthetic]
		{\includegraphics[scale=0.35]{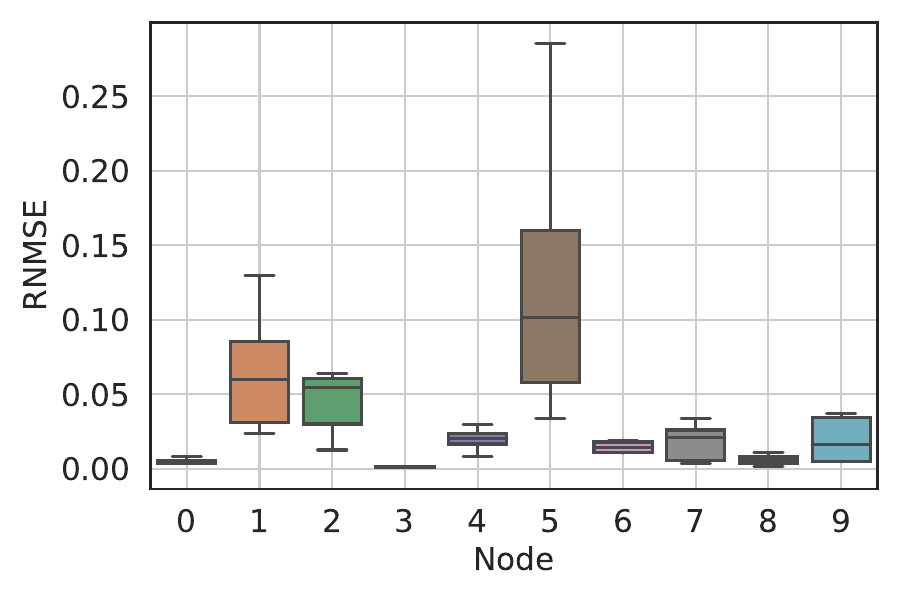}\label{figfake}}
			\hspace{-2.5mm}
		\subfloat[Github]
		{\includegraphics[scale=0.35]{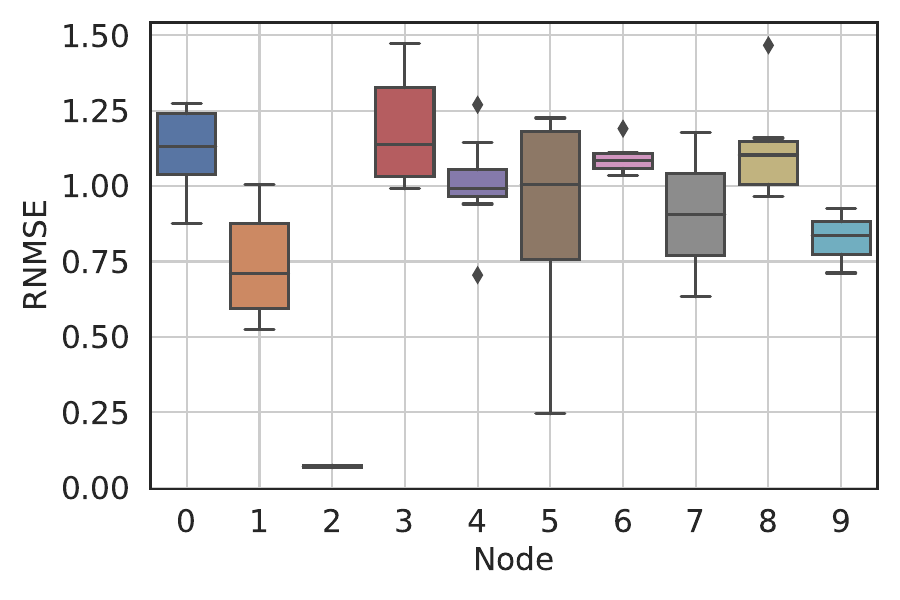}	\label{figgit}}
	\caption{One-hop neighbor reconstruction (RNMSE) for Node Attacker-1 with GraphSAGE.
	}
	\label{onehop}
\end{figure*}

\subsection{Attacking batched data} \label{sec:attack_batch_appendix}
\begin{table}
\caption{Target node feature reconstruction $\mathbf{x}_v$ (RNMSE $\times 10^{-2}$) for Node Attacker 1 in a batched setting.}
\centering
\begin{tabular}{lllll} 
\toprule
Dataset                    &                                                   & $B$=5                                                           & $B$=20                                                          & $B$=50                                                          \\ 
\midrule
Synth. & \begin{tabular}[c]{@{}l@{}}Mean\\Min\end{tabular} & \begin{tabular}[c]{@{}l@{}}0.07$\pm$0.02\\ 0.0\end{tabular} & \begin{tabular}[c]{@{}l@{}}0.8$\pm$0.3\\ 0.13\end{tabular} & \begin{tabular}[c]{@{}l@{}}3.9$\pm$1.4\\ 0.4\end{tabular}  \\ 
FB     & \begin{tabular}[c]{@{}l@{}}Mean\\Min\end{tabular} & \begin{tabular}[c]{@{}l@{}}0.9$\pm$0.4\\ 0.8\end{tabular}  & \begin{tabular}[c]{@{}l@{}}12$\pm$64\\ 28\end{tabular}     & \begin{tabular}[c]{@{}l@{}}17$\pm$95\\ 65\end{tabular}       \\ 
Github &
\begin{tabular}[c]{@{}l@{}}Mean\\Min\end{tabular} & \begin{tabular}[c]{@{}l@{}}28$\pm$14\\ 0.3\end{tabular} & \begin{tabular}[c]{@{}l@{}}126$\pm$78\\ 111\end{tabular}  & \begin{tabular}[c]{@{}l@{}}132$\pm$83\\ 107\end{tabular}  \\
\bottomrule
\end{tabular}
\label{batch_node}
 \vskip -0.1in
\end{table}

\begin{table}
\caption{Reconstruction performance of $\mathbf{X}$ (RNMSE $\times 10^{-2}$) for Graph Attacker-N for batched graph dataset.}
\centering
\begin{tabular}{lllll} 
\toprule
$d$                    &                                                   & $B$=5                                                           & $B$=20                                                          & $B$=50                                                          \\ 
\midrule
4 & \begin{tabular}[c]{@{}l@{}}Mean\\Min\end{tabular} & \begin{tabular}[c]{@{}l@{}}23 $\pm$ 31\\ 0.04\end{tabular} & \begin{tabular}[c]{@{}l@{}}82 $\pm$ 27\\ 9\end{tabular} & \begin{tabular}[c]{@{}l@{}}119$\pm$1\\  86\end{tabular}  \\ 
8        & \begin{tabular}[c]{@{}l@{}}Mean\\Min\end{tabular} & \begin{tabular}[c]{@{}l@{}}25$\pm$33\\  0.04\end{tabular}  & \begin{tabular}[c]{@{}l@{}}84$\pm$34\\ 9\end{tabular}     & \begin{tabular}[c]{@{}l@{}}121$\pm$9\\ 87\end{tabular}       \\ 
12 &
\begin{tabular}[c]{@{}l@{}}Mean\\Min\end{tabular} & \begin{tabular}[c]{@{}l@{}}22$\pm$30\\ 0.04\end{tabular} & \begin{tabular}[c]{@{}l@{}}89$\pm$36\\ 9\end{tabular}  & \begin{tabular}[c]{@{}l@{}}121$\pm$9\\  87\end{tabular}  \\
\bottomrule
\end{tabular}
\label{batch_graph}
\vskip -0.1in
\end{table}

We also extended the experiments for node and graph attackers to the minibatch setting. Specifically, we only evaluate GraphSAGE for both tasks since it provides the best reconstruction in the stochastic (single data sample) setting as shown in the previous experiments. Although effective attack techniques \citep{fishing} have been proposed to reconstruct image data from large batches, here we tested the performance of the attackers by simply optimizing with respect to the dummy input batch. Hence, the performance may improve if attack techniques for batch data are also incorporated. Due to possible permutation ambiguity in a batch setting, the Hungarian algorithm is used to find the best match with the actual input data. The evaluation metric for the algorithm is the RNMSE between all pairs of the dummy batch and the actual batch. These results are averaged over $10$ independent runs for each batch size. Experiments are conducted for Node Attacker 1 and Graph Attacker-N.
\paragraph{Node Attacker 1.} Table \ref{batch_node} shows the performance of reconstructing the target node features for batches of size $B=5,20$, and $50$. It can be observed that the best results are obtained for the Synthetic Dataset with accurate reconstruction across all batch sizes. When the batch size is $B=5$, the attack provides accurate reconstruction across all the $3$ datasets.
\paragraph{Graph Attacker-N.} In this scenario, the attack attempts to reconstruct $\mathbf{X}$ for the whole batch with $\mathbf{A}$ being known. To simplify the matching after running the attack we only consider graphs with the same number of nodes. This is done by generating {Erdos-Renyi (ER) graphs with the number of nodes $n=50$}. For a ER graph with $n$ nodes and edge probability $p$, the average degree of a node is $d=(n-1)p$. We conduct experiments by varying $d$ with different batch sizes. The nodal features are generated by sampling from the standard normal distribution i.e. $\mathcal{N}(0,1)$. Table \ref{batch_graph} lists the performance of reconstructing $\mathbf{X}$ with varying batch sizes and degrees. As expected, the quality of reconstruction declines with increasing batch size, with the best results being achieved at $B=5$. Also, the performance remains roughly the same across all degrees.   

\subsection{Sensitivity  Analysis for Hyperparameters} \label{sec:sensitive_analysis_appendix}
{
In our attack, numerous hyperparameters come into play, including but not limited to the regularization parameters $\alpha$ and $\beta$, the graph size, and the network width. In our analysis, we examined a subset of these hyperparameters in different settings and subsequently present the corresponding outcomes below.}
\begin{figure}[h]
\vskip -0.1in
\centering
\includegraphics[scale=0.45]{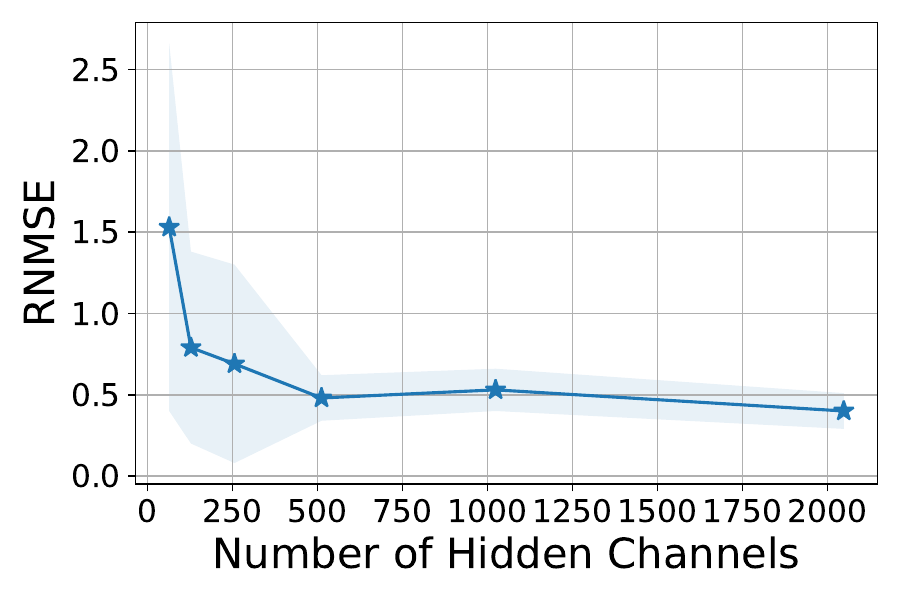}
	\caption{Variation in RNMSE of the reconstructed target features vs. the number of hidden channels for a GCN framework on the Github dataset.}
	\label{channel}
 \vskip -0.15in
\end{figure}

\paragraph{Network Width.}
All previous experiments were conducted with the number of hidden channels set to 100. In the setting of Node Attacker 1, the attack could \textit{successfully} reconstruct the target nodal features for a GraphSAGE framework with high accuracy. However, for a GCN framework, the attack only managed to reconstruct the aggregated nodal features. Consequently, in this section, we explore the impact of varying the number of hidden channels in the 2-layer GCN framework on the quality of reconstruction. Figure \ref{channel} illustrates the results of target node feature reconstruction for the Github Dataset. Notably, as the number of hidden channels increases, there is a clear decrease in the RNMSE values, suggesting that wider networks are more susceptible to the attack.
\begin{figure*}[h]
\centering
		\subfloat[]
		{
            \includegraphics[scale=0.45]{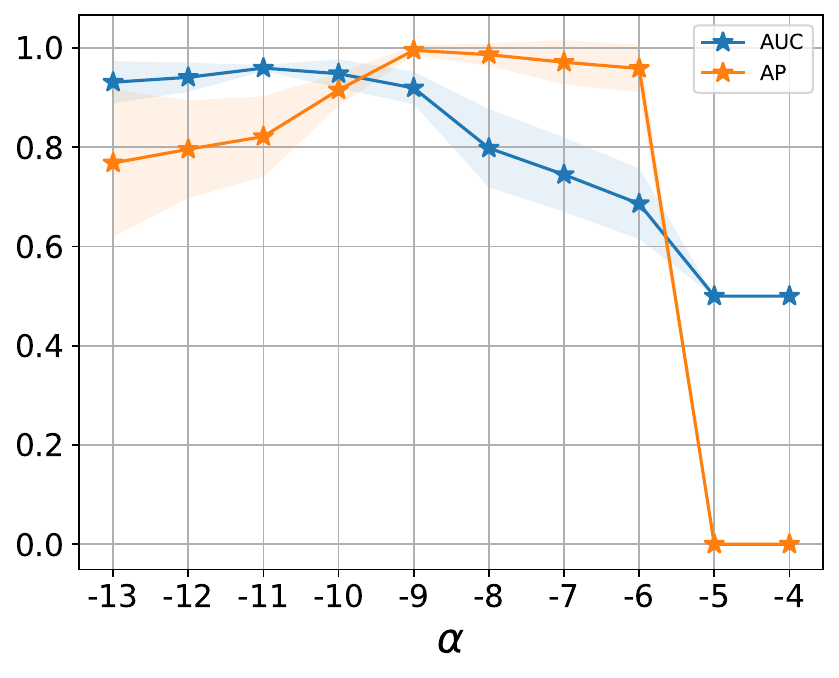}
            \label{alpha}
        }
			 \hspace{8mm}
		\subfloat[]
		{\includegraphics[scale=0.45]{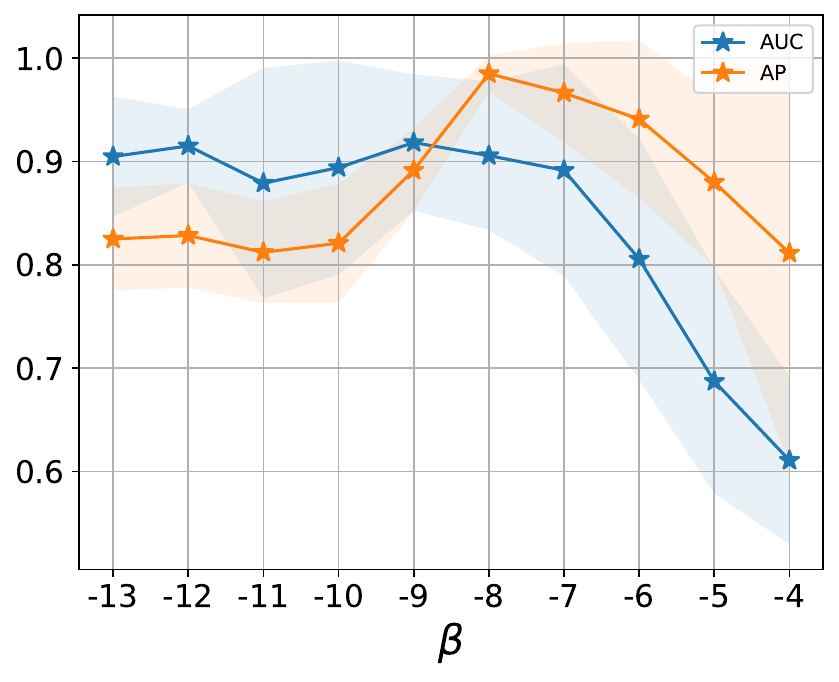}\label{beta}}
	\caption{Variation of AUC and AP vs. $\alpha$ and $\beta$ for Graph Attacker-G.}
	\label{params}
  \vskip -0.15in
\end{figure*}

\paragraph{Regularization Hyperparameters.}

To investigate the influence of $\alpha$ and $\beta$ in \eqref{regularized}, we examine the effects on the reconstruction performance of Graph Attacker-G for the GCN framework. More specifically, we maintain one hyperparameter at a fixed value while altering the other, and depict the AUC and AP values. 
Figure \ref{alpha} illustrates the  AUC and AP values for different $\alpha$ values in a log scale with base $10$.
The values on the x-axis are also in the log scale with base $10$. In Figure \ref{alpha}, $\alpha$ is varied while keeping $\beta=10^{-7}$. Similarly, in Figure \ref{beta}, where $\beta$ is varied, $\alpha$ is fixed at $10^{-9}$. It is evident from the figures that higher values of hyperparameters severely degrade the performance of the attack whereas extremely low values give suboptimal results. This clearly demonstrates the effectiveness of the feature smoothness and sparsity regularizers.

\section{Additional Graph Classification results on TUDatasets} \label{tudata}
All attack settings considered in previous experiments in Section \ref{experiments} are defined within the context of social networks. In order to assess the efficacy of the attack on alternative datasets, such as molecular datasets which are commonly employed in graph classification tasks, we extend our analysis to the TUDataset \citep{tudataset}. 

For the graph classification task, we conduct the attack on the following three datasets from the TUDataset repository \citep{tudataset}.

\begin{itemize}
    \item \textbf{MUTAG} \citep{debnath1991structure}: It consists of $188$ chemical compounds from two classes. The vertices represent atoms and edges represent chemical bonds. Node features denote the atom type represented by one-hot encoding.  Labels represent their mutagenic effect on a specific gram-negative bacterium.%
    \item \textbf{COIL-RAG} \citep{riesen2008iam}: Dataset in computer vision where images of objects are represented as region adjacency graphs. Each node in the graph is a specific part in the original image. Graph labels denote the object type.
    \item \textbf{FRANKENSTEIN} \citep{kazius2005derivation}: It contains  graphs representing chemical molecules, where the vertices are chemical atoms and edges represent the bonding. Node features denote the atom type. Binary labels represent mutagenecity. 
    \end{itemize}

The average number of nodes in the graphs of these datasets is significantly smaller than social networks (see Table \ref{tab:data2}). However, unlike social networks, these graphs might not necessarily exhibit properties like feature smoothness and sparsity. Hence, for all the attacks we use the objective function as defined in \eqref{cosineloss}.
The results in all attack settings are again consistent with the theoretical analysis. 

The results are obtained by randomly selecting $20$ graphs from the datasets. The selected graphs are fixed for different parameter selections for fair comparison. In this case since the graphs are small we can directly use the Mean Absolute Error (MAE) of the reconstructed adjacency matrix to measure the performance of reconstructing the underlying graph structure. Specifically, the MAE is given as follows
    \begin{align}
        \text{MAE}(\mathbf{A},\hat{\mathbf{A}})=\frac{2*|\text{tri}_{\text{lower}}(\mathbf{A})-\text{tri}_{\text{lower}}(\hat{\mathbf{A}})|}{N(N+1)}
    \end{align}   
Also, instead of directly sampling edges from the probabilistic adjacency matrix as given in Section \ref{method}, the MAE is also evaluated with and without thresholding the reconstructed adjacency matrix. We apply min-max normalization to the resulting $\hat{\mathbf{A}}$ before thresholding. After applying the threshold, $\hat{\mathbf{A}}_{\tau}:=th(\hat{\mathbf{A}},\tau)$ is obtained, where $th(\mathbf{Z},\tau)$ is a entry-wise operator that returns $0$ if $z_{ij}<\tau$, and returns $1$ otherwise.




\begin{table}[ht]
\small
\centering
\renewcommand{\arraystretch}{0.95}
\caption{Dataset statistics (Graph Classification). Edge density is calculated based on (\#Avg.Edges/\#Edges in the fully connected graph).}
\vskip -0.05in
\begin{tabular}{lcccccc}
\toprule
\textbf{Dataset} & \textbf{\#Avg. Nodes} & \textbf{\#Avg. Edge} & 
\textbf{\#Edge Density}  & \textbf{\#Labels} 
\\

\midrule MUTAG &  17.93 & 19.79 & 0.1304 & 2
\\ 
COIL-RAG  &3.01 & 3.02 &  0.9983 & 100\\ 
FRANK. & 16.90 & 17.88 &  0.1331 & 2
\\
\bottomrule
\end{tabular}
\label{tab:data2}
\vskip -0.1in
\end{table}
\begin{table*}[]
\caption{Reconstruction of $\mathbf{A}$ (MAE) for Graph Attacker-G}
\centering
  \newcommand{\threecol}[1]{\multicolumn{2}{c}{#1}}
  \renewcommand{\arraystretch}{1.0}
  \newcommand{\noopcite}[1]{} 
  \newcommand{\skiplen}{0.01\linewidth} 
\scalebox{0.95}{\subfloat[GraphSAGE\label{}]{
     \begin{tabular}{@{}cccc@{}}
    \toprule
      {$\tau$}& MUTAG & COIL-RAG & FRANK.($\times 10^{-4}$)  \\
    \midrule
    N/A &0.25$\pm$0.10  &0.02$\pm$0.04 &0.08$\pm$0.3 \\
    \midrule
    0.2 &0.27$\pm$0.11 &0.03$\pm$0.07  & 0.0$\pm$0.0\\
    0.4 &0.24$\pm$0.11 &0.03$\pm$0.06 & 0.0$\pm$0.0 \\
    0.6 &0.23$\pm$0.11 &0.02 $\pm$ 0.04 & 0.0$\pm$0.0\\
    0.8 &0.23$\pm$0.10 &0.003$\pm$0.13 &0.0$\pm$0.0 \\
     \bottomrule
     \end{tabular}
     }}
     \hspace{4mm}
\scalebox{0.95}{\subfloat[GCN\label{result:table_init_gcn}]{
     \begin{tabular}{@{}cccc@{}}
    \toprule
      {$\tau$}& MUTAG & COIL-RAG & FRANK. ($\times 10^{-2}$)\\
    \midrule
    N/A &0.20$\pm$0.04 &0.22$\pm$0.11&4.58$\pm$2.59\\
    \midrule
    0.2 &0.27$\pm$0.07& 0.16$\pm$0.09& 4.74$\pm$3.3\\
    0.4 &0.19$\pm$0.06& 0.16$\pm$0.09 & 4.20$\pm$2.15\\
    0.6 &0.15$\pm$0.04& 0.16$\pm$0.09 & 4.08$\pm$1.96\\
    0.8 & 0.14$\pm$0.01& 0.26$\pm$0.267 & 4.22$\pm$2.04 \\
     \bottomrule
     \end{tabular}
     }}

     \label{tab:table_adj}
\end{table*}

\paragraph{Graph Attacker-G.}
Table \ref{tab:table_adj} shows the MAE of reconstructing $\mathbf{A}$ with thresholding (for different values of $\tau$) and without (N/A in the table). The attack achieves the best results on the FRANKENSTEIN dataset while using GraphSAGE. Each node in the FRANKENSTEIN dataset has $D=780$ features, whereas MUTAG and COIL-RAG have only $D=7$ and $D=64$ features respectively. Moreover, the average number of nodes ($N$) per graph in FRANKENSTEIN, MUTAG and COIL-RAG are $16.9$, $17.9$ and $3.01$ respectively. This implies that $\mathbf{X}\in \mathbb{R}^{N \times D}$ is full-row rank with high probability in COIL-RAG and FRANKENSTEIN. Meanwhile, the node features in the MUTAG dataset consist of only binary values. It can be observed from the experimental results that the best and worst results are obtained on FRANKENSTEIN and MUTAG respectively. This is also consistent with Proposition \ref{prop:prop3} which requires $\mathbf{X}$ to be full row-rank for successfully reconstructing $\tilde{\mathbf{A}}$. The reconstruction performance improves for COIL-RAG with GraphSAGE at $\tau=0.8$. The FRANKENSTEIN dataset is reconstructed accurately even with the GCN framework.

\paragraph{Graph Attacker-N.}
 Table \ref{tab:table4} lists the RNMSE
 of reconstructing $\mathbf{X}$. It can be observed that the attack is able to reconstruct $\mathbf{X}$ with high accuracy for GraphSAGE across all three datasets. For the GCN framework, it can be observed that the algorithm fails to reconstruct $\mathbf{X}$.

\begin{table}[]
\centering
\caption{Reconstruction of $\mathbf{X}$ (RNMSE) for Graph Attacker-N.}
\vskip 0.15in
\small
 \setlength{\tabcolsep}{2pt}
\begin{tabular}
{ccccc}
\toprule
\centering
   & {MUTAG} & {COIL-RAG}  & {FRANK.} \\
\midrule
GraphSAGE($\times 10^{-2}$)& 0.07$\pm$0.06 & 6.91$\pm$7.1 & 21.95$\pm$16.05 \\
GCN & 8.80$\pm$1.27 & 1.51$\pm$0.17 & 2.68$\pm$0.30 \\
\bottomrule
\end{tabular}

\label{tab:table4}
\vskip -0.1in
\end{table}



\begin{figure*}[ht]
\captionsetup[subfigure]{justification=centering}
	\centering
\vskip 0.15in
 
\centering
\hspace{-2mm}
		\subfloat[Entries of $\hat{\mathbf{X}}^{1}$  and $\hat{\mathbf{A}}^{1}$ set to $1$]
		{\includegraphics[scale=0.31]{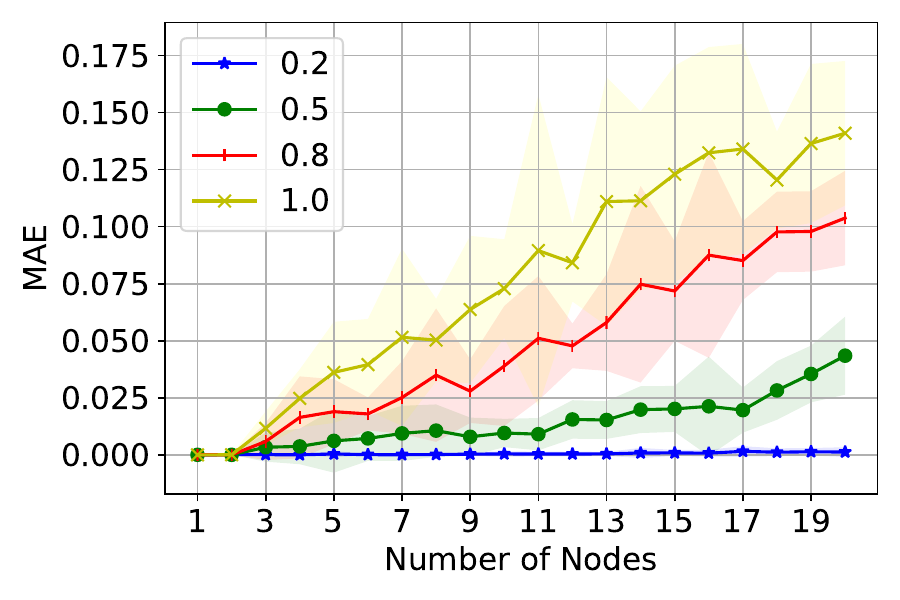}\label{shape_one}}
		~
		\subfloat[Entries of $\hat{\mathbf{X}}^{1}$ and $\hat{\mathbf{A}}^{1}$ set to $0.5$]
		{\includegraphics[scale=0.31]{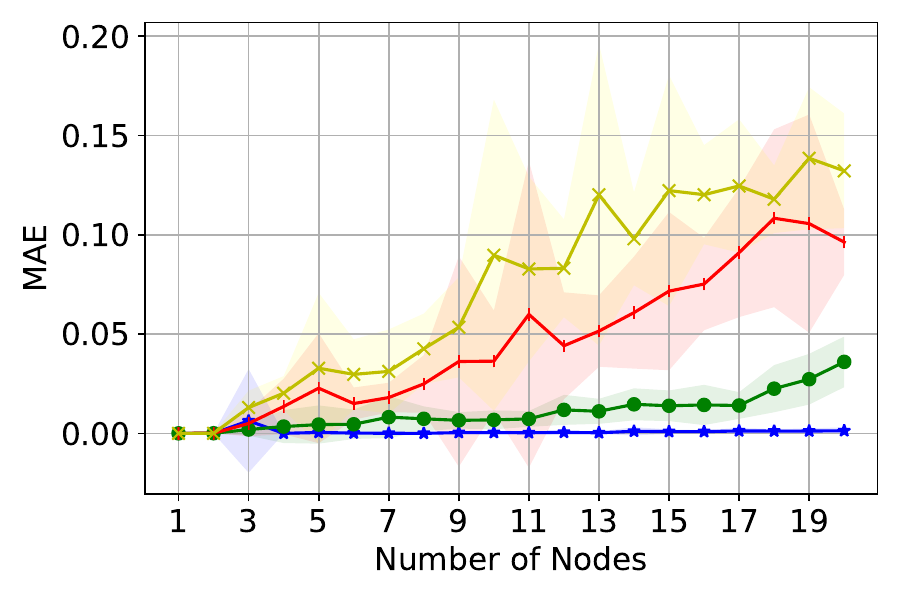}\label{shape_0.5}}
		~
		\subfloat[Entries of $\hat{\mathbf{X}}^{1}$ and $\hat{\mathbf{A}}^{1}$ sampled from $\mathcal{N}(0, 1)$]
        {\includegraphics[scale=0.31]{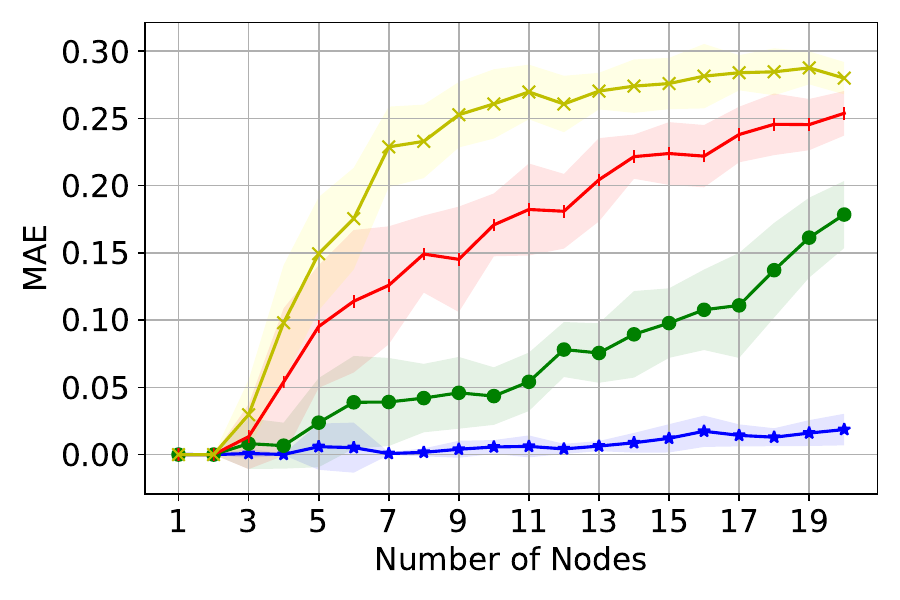}	\label{shape_normal}}
  		

	\caption{Reconstruction of different graph structures (MAE) for Graph Attacker-GN using GraphSAGE.
 }
	\label{fig:vis}
\end{figure*}

\begin{table*}[t]
\caption{
Reconstruction of $\mathbf{A}$ (MAE) in for Graph Attacker-GN with different thresholds. 
}
\centering
\setlength{\tabcolsep}{2pt}
  \newcommand{\onecol}[1]{\multicolumn{1}{c}{#1}}
  \newcommand{\threecol}[1]{\multicolumn{3}{c}{#1}}
  \renewcommand{\arraystretch}{1.0}

  \newcommand{\noopcite}[1]{} 
  \newcommand{\skiplen}{0.01\linewidth} 
\subfloat[GraphSAGE\label{result:table_adj_graph_sage}]
{
    \begin{tabular}{@{}c c ccccc cc c@{}}
\toprule
      $\tau$ & \onecol{{MUTAG}} && \onecol{{COIL-RAG}} &&\onecol{{FRANK.}} \\
 
    \midrule
    N/A & 0.37$\pm$0.10 && 0.18$\pm$0.11 && 0.41$\pm$0.17\\  
    \midrule
    0.2 & 0.47 $\pm$0.13 && 0.20$\pm$0.11 && 0.64$\pm$0.25&\\

    0.4 & 0.43$\pm$0.13 && 0.15$\pm$0.14 && 0.50$\pm$0.24&\\ 
    0.5 & 0.39$\pm$0.13 && 0.14$\pm$0.17 && 0.38$\pm$0.20&\\ 
    0.6 & 0.32$\pm$0.08 && 0.11$\pm$0.18 && 0.29$\pm$0.15&\\ 
    0.8 & 0.26$\pm$0.06 && 0.19$\pm$0.18 && 0.19$\pm$0.10&\\ 
     \bottomrule
     \end{tabular}
     }
     \hspace{8mm}
\subfloat[GCN\label{result:table_adj_gcn}]
{
\begin{tabular}{@{}c c ccccc cc c@{}}
    \toprule
      $\tau$ & \onecol{{MUTAG}} && \onecol{{COIL-RAG}} &&\onecol{{FRANK.}} \\
 
    \midrule
        N/A &0.48$\pm$0.05 &&0.20$\pm$0.12 &&0.41$\pm$0.08\\   
    \midrule
    0.2  & 0.68 $\pm$0.06 && 0.15$\pm$0.08 && 0.64$\pm$0.08&\\  
    0.4  & 0.55 $\pm$0.07 && 0.14$\pm$0.08 && 0.46$\pm$0.12&\\  
    0.5  & 0.47 $\pm$0.07 && 0.14$\pm$0.08 && 0.38$\pm$0.11&\\ 
    0.6  & 0.40$\pm$0.07  && 0.14$\pm$0.08 && 0.32$\pm$0.10&\\ 
    0.8  & 0.30$\pm$0.05  && 0.32$\pm$0.34 && 0.19$\pm$0.08&\\ 
     \bottomrule
     \end{tabular}
     }
\label{result:table_adj_threshold}
\vskip -0.1in
\end{table*}

\paragraph{Graph Attacker-GN.}
{In this scenario, since the attacker has no knowledge of the graph data, we try different settings and observe their effects on the reconstruction.}
To improve the attack for Graph Attacker-GN where we have no prior knowledge of the graph, we utilize various initialization and thresholding strategies. Specifically, since the optimization process given in the algorithms can be greatly improved by utilizing different starting points, we initialize the dummy node feature and adjacency matrices with different values. In the case of thresholding, instead of sampling the edges given the probabilistic adjacency matrix $\hat{\mathbf{A}}^{P+1}$, we use various threshold values to sample edges. 


 \textbf{The effect of initialization:} 
 Previous studies have established that the quality of reconstruction heavily depends on the initialization \citep{gupta2022recovering, inverting}.
 The closer the initialization is to the original data, the better the reconstruction. An initialization value of $0.1$ implies that all the entries of $\hat{\mathbf{X}}$ and $\hat{\mathbf{A}}$ are initialized to $0.1$ and similarly $\mathcal{N}(0,1)$ implies that all entries are initialized from the standard normal distribution. { Since the values of $\mathbf{X}$ and $\mathbf{A}$ lie in the range $[0,1]$ for all three datasets, we tested the effect of different initialization points in the same range. Specifically, we tested the initialization values as $\{0,0.1,0.5,1,\mathcal{N}(0,1)\}$. The results of the attack with different initialization strategies is shown in Table \ref{result:table_init_all}. The leftmost column gives the initialization values of the dummy adjacency and node feature matrices. For the adjacency matrix, the results are listed with and without thresholding. Similar to Scenario 2, we apply min-max normalization (before thresholding) and a threshold value of $0.5$ is chosen. A more compact version of the results showing the best performance over all initialization strategies is shown in Table \ref{result:table_init_best}. It can be observed that for GraphSAGE, the attack can accurately reconstruct both $\mathbf{X}$ and $\mathbf{A}$ in the MUTAG and FRANKENSTEIN datasets. While using GCN, even though $\mathbf{X}$ cannot be recovered, $\mathbf{A}$ can be reconstructed accurately. 

For GraphSAGE in Table \ref{result:table_init_all_sage}, it can be observed that even without any knowledge of the graph data, the attack can accurately reconstruct both $\mathbf{X}$ and $\mathbf{A}$ just by utilising a good initialization strategy. For example, in Table \ref{result:table_init_all_sage} for the FRANKENSTEIN dataset, the attack gives a high error for both $\mathbf{A}$ and $\mathbf{X}$ while using $\mathcal{N}(0,1)$ as the initialization. However, the MAE drops by orders of magnitude when we initialize $\mathbf{X}$ and $\mathbf{A}$ to a value of $0.2$. 
Even for the GCN framework in Table \ref{result:table_init_all_gcn}, while the data cannot be reconstructed in most of the cases, it can be observed that a good initialization point can leak $\mathbf{A}$ in COIL-RAG and FRANKENSTEIN datasets.

\begin{table*}[t]
\caption{Reconstruction of $\mathbf{X}$ (RNMSE) and $\mathbf{A}$ (MAE) for Graph Attacker-GN.
}
\vskip 0.15in
\centering
  \newcommand{\threecol}[1]{\multicolumn{3}{c}{#1}}
  \renewcommand{\arraystretch}{1.0}

  \newcommand{\noopcite}[1]{} 
  \newcommand{\skiplen}{0.01\linewidth} 

\begin{tabular}{@{}c c ccccc cc@{}}
    \toprule
      \multirow{2}{*}{Dataset}& \threecol{{GraphSAGE}} && \threecol{{GCN}}  \\
 
  \cmidrule(l{4pt}r{4pt}){2-4} \cmidrule(l{4pt}r{4pt}){5-8} 
    & $\hat{\mathbf{X}}$ & $\hat{\mathbf{A}}$ & $\hat{\mathbf{A}}_{0.5}$ 
    &&$\hat{\mathbf{X}}$ & $\hat{\mathbf{A}}$ & $\hat{\mathbf{A}}_{0.5}$  \\
    \midrule
    MUTAG & 0.0027$\pm$0.0078 &0.22$\pm$0.09 & 0.21$\pm$0.11
    && 1.07$\pm$0.20 &0.24$\pm$0.02 &0.14$\pm$0.04\\ 
    
    COIL-RAG & 0.005 $\pm$ 0.0015 & 0.01$\pm$0.03 &0.001$\pm$0.006
    && 0.74$\pm$0.13 &0.05$\pm$0.10 &0.03$\pm$0.10\\ 
      
    FRANK. & 0.01$\pm$0.006 & 0.004 $\pm$ 0.002 & 0.00$\pm$0.00
    && 0.63$\pm$0.18 &0.07$\pm$0.02 &0.007$\pm$0.01\\ 
     \bottomrule
     \end{tabular}

\label{result:table_init_best}
\vskip -0.1in
\end{table*}

 \textbf{The effect of threshold:}
To explore the effect of thresholding on the adjacency matrix, we examined the MAE across all three datasets using different threshold values. Table \ref{result:table_adj_threshold} lists the MAE of reconstructing $\mathbf{A}$ at different thresholds. 
It can be observed that unlike Scenario 2, the performance of the algorithm stays roughly the same for both GraphSAGE and GCN. Also, the FRANKENSTEIN dataset is no longer vulnerable to the attack unlike Scenario 2. Few of the threshold values reduce the MAE by a considerable amount. For example, $\tau=0.8$ works well for MUTAG and FRANKENSTEIN in the case of GraphSAGE, and for MUTAG in the case of GCN. 


}
\begin{table*}[h]
\caption{Reconstruction of $\mathbf{X}$ (RNMSE) and $\mathbf{A}$ (MAE) for Graph Attacker-GN. 
}
\vskip 0.1in
\small
\centering
\setlength{\tabcolsep}{2pt}
  \newcommand{\threecol}[1]{\multicolumn{3}{c}{#1}}
  \renewcommand{\arraystretch}{1.0}

  \newcommand{\noopcite}[1]{} 
  \newcommand{\skiplen}{0.01\linewidth} 
\subfloat[GraphSAGE\label{result:table_init_all_sage}]
{
    \begin{tabular}{@{}c c cccccccc cc c@{}}
    \toprule
      \multirow{2}{*}{$init$}& \threecol{{MUTAG}} && \threecol{{COIL-RAG ($\times 10^{-2}$)}} &&\threecol{{FRANK. ($\times 10^{-2}$)}} \\
  \cmidrule(l{2pt}r{2pt}r{2pt}){2-4} \cmidrule(l{2pt}r{2pt}r{2pt}){5-8} 
  \cmidrule(l{2pt}r{2pt}r{2pt}){9-12} 
    & $\hat{\mathbf{X}}(\times 10^{-2})$ & $\hat{\mathbf{A}}$ & $\hat{\mathbf{A}}_{0.5}$  &&$\hat{\mathbf{X}}$ & $\hat{\mathbf{A}}$ & $\hat{\mathbf{A}}_{0.5}$ 
    &&$\hat{\mathbf{X}}$ & $\hat{\mathbf{A}}$ & $\hat{\mathbf{A}}_{0.5}$ \\
    \midrule
    0.0 & 4$\pm$1 &0.37$\pm$0.10 &0.47$\pm$0.13
    && 31$\pm$8 & 1$\pm$9 & 4$\pm$8
    && 29$\pm$14 &27$\pm$15 &21$\pm$15&\\ 
    
    0.1 & 0.9$\pm$0.9 &0.23$\pm$0.08 & 0.26 $\pm$ 0.14 
    && 22$\pm$1 & 9$\pm$8 &3$\pm$7
    && 4$\pm$5 &1$\pm$2 &0.6$\pm$1&\\ 
      
    0.2 & 0.8$\pm$2 &0.22$\pm$0.09 &0.21$\pm$0.11
    && 9$\pm$5 &5$\pm$4 &0.5$\pm$2
    && 1$\pm$0.6 &0.4$\pm$0.2 &0.0$\pm$0.0&\\ 

    0.5 & 0.3$\pm$0.4 &0.23$\pm$0.10 & 0.22$\pm$0.12
    && 0.6$\pm$1 &1$\pm$3 &0.1$\pm$0.6
    && 1$\pm$0.9 &0.4$\pm$0.3 &0.0$\pm$0.0&\\ 
    
    1.0 & 0.2$\pm$0.7 &0.23$\pm$0.10 &0.22$\pm$0.11
    && 0.5$\pm$1 & 1$\pm$3 &0.1$\pm$0.6
    && 1$\pm$4 &0.4 $\pm$0.8 & 0.06$\pm$0.2&\\

    $\mathcal{N}(0,1)$ & 6 $\pm$ 3 & 0.37 $\pm$ 0.10 & 0.39 $\pm$ 0.13
    && 26 $\pm$ 12 & 18 $\pm$ 11 & 14 $\pm$ 17 
    && 66 $\pm$ 43 & 41 $\pm$ 17 & 38 $\pm$ 20 \\

     \bottomrule
     \end{tabular}
     }\\
     \vskip 0.1in
\subfloat[GCN\label{result:table_init_all_gcn}]
{
\scalebox{0.92}{
\begin{tabular}{@{}c c ccccc ccc cc c@{}}
    \toprule
      \multirow{2}{*}{$init$}& \threecol{{MUTAG}} && \threecol{{COIL-RAG}} &&\threecol{{FRANK. }} \\
 
  \cmidrule(l{4pt}r{4pt}){2-4} \cmidrule(l{4pt}r{4pt}){5-8} 
  \cmidrule(l{4pt}r{4pt}){9-12} 
    & $\hat{\mathbf{X}}$ & $\hat{\mathbf{A}}$  & $\hat{\mathbf{A}}_{0.5}$ 
    &&$\hat{\mathbf{X}}$ & $\hat{\mathbf{A}}$  & $\hat{\mathbf{A}}_{0.5}$ 
    &&$\hat{\mathbf{X}}$ & $\hat{\mathbf{A}}$  & $\hat{\mathbf{A}}_{0.5}$  \\
    \midrule
    0.0 & 1.38$\pm$0.31 &0.25$\pm$0.05 &0.21$\pm$0.07
    && 1.96$\pm$2.2 &0.05$\pm$0.10 &0.03$\pm$0.10
    && 1.11$\pm$0.27 &0.11$\pm$0.03 &0.03$\pm$0.04&\\ 
    
    0.1 & 1.06 $\pm$ 0.15 &0.24$\pm$0.02 &0.14$\pm$0.04
    && 1.54$\pm$0.1.48 &0.43$\pm$0.44 &0.42$\pm$0.47
    && 0.66$\pm$0.15 &0.07$\pm$0.02 &0.007$\pm$0.01&\\ 
      
    0.2 & 1.07$\pm$0.20 & 0.25 $\pm$ 0.02 & 0.14$\pm$0.04
    && 1.12$\pm$0.71 &0.50$\pm$0.40 &0.47$\pm$0.43
    && 0.63$\pm$0.18 &0.08$\pm$0.02 &0.007$\pm$0.01&\\ 

    0.5 & 1.47$\pm$0.43 &0.33$\pm$0.06 &0.23$\pm$0.09
    && 0.75$\pm$0.14 &0.17$\pm$0.33 &0.17$\pm$0.35
    && 0.72$\pm$0.25 &0.14$\pm$0.05 &0.05$\pm$0.07&\\
    
    1.0 & 1.15$\pm$0.33 &0.34$\pm$0.04 &0.26$\pm$0.06
    && 0.74$\pm$0.13 &0.76$\pm$0.31 &0.78$\pm$0.32
    && 0.81$\pm$0.23 &0.19$\pm$0.05 &0.13$\pm$0.06&\\ 

        $\mathcal{N}(0,1)$ & 3.26$\pm$0.36 & 0.48 $\pm$ 0.05 & 0.47 $\pm$ 0.07
    && 7.98 $\pm$ 1.50 & 0.20 $\pm$ 0.12 & 0.14 $\pm$ 0.08 
    && 4.54 $\pm$ 0.44 & 0.41 $\pm$ 0.08 & 0.388 $\pm$ 0.11 \\
     \bottomrule
     \end{tabular}
     }
     }

\label{result:table_init_all}
\vskip -0.1in
\end{table*}

\textbf{Vulnerable graph structures:} In order to identify vulnerable graph structures, we tested the performance of the attack in attacking graphs with varying numbers of nodes and edges for the GraphSAGE framework. Specifically, the performance was tested on Erdos-Renyi graphs (with varying edge probabilities). Figure \ref{fig:vis} shows the MAE of reconstructing $\mathbf{A}$ for different graphs. The number of features for all graphs is set to $D=10$, and their values are sampled from the standard normal distribution. Each subfigure represents different initialization strategies for $\hat{\mathbf{X}}$ and $\hat{\mathbf{A}}$. For each graph with a specific number of nodes, the MAE is averaged over $20$ runs of the attack. As the number of nodes in a graph increases, the quality of reconstruction decreases. More importantly, it is evident that the vulnerability to the attack increases with decreasing sparsity. Fully connected graphs (ER graphs with edge probability $1.0$) are the least vulnerable to the attack.

\begin{table}[ht]
\centering
\caption{Reconstruction performance (GSM-0, GSM-1, GSM-2, and Full) on Tox21 results using GCN.}
\label{tab:tox21_gcn_updated}
\begin{tabular}{lcccc}
\toprule
 & \textbf{GSM-0} & \textbf{GSM-1} & \textbf{GSM-2} & \textbf{FULL} \\
\midrule
\textbf{GLG (Ours)}
& $80.86^{+1.5}_{-1.2}$ 
& $96.51^{+1.1}_{-3.3}$ 
& $97.02^{+0.8}_{-1.1}$ 
& $98.71^{+1.0}_{-2.8}$\\[1mm]
\textbf{GRAIN} 
& $86.9^{+4.2}_{-5.7}$ 
& $83.9^{+5.2}_{-6.9}$ 
& $82.6^{+5.7}_{-7.4}$ 
& $68.0 \pm 1.7$ \\[1mm]
\textbf{DLG}
& $31.8^{+4.5}_{-4.3}$ 
& $20.3^{+5.5}_{-4.8}$ 
& $22.8^{+6.6}_{-5.6}$ 
& $1.0 \pm 0.2$ \\[1mm]
\textbf{DLG + A}
& $54.7^{+3.9}_{-4.2}$ 
& $60.1^{+4.6}_{-5.2}$ 
& $76.7^{+3.6}_{-4.8}$ 
& $1.0 \pm 0.2$ \\[1mm]
\textbf{TabLeak}
& $25.1^{+5.1}_{-4.3}$ 
& $12.4^{+5.5}_{-4.3}$ 
& $10.8^{+5.6}_{-3.9}$ 
& $1.0 \pm 0.2$ \\[1mm]
\textbf{TabLeak + A}
& $55.6^{+3.9}_{-3.9}$ 
& $57.7^{+4.1}_{-4.6}$ 
& $73.8^{+2.8}_{-3.5}$ 
& $1.0 \pm 0.2$ \\[1mm]

\bottomrule
\end{tabular}
\label{baseline_comp}
\end{table}

\begin{table}[h]
\centering
\caption{Reconstruction of $\mathbf{X}$ (RNMSE $\downarrow$) and $\mathbf{A}$ (AUC $\uparrow$ and AP $\uparrow$) with Graph Attacker-GN on Tox21 dataset using GCN.}
\begin{tabular}{lccc}
\toprule
\textbf{Framework} & $\mathbf{X} \downarrow$ & \textbf{AUC} $\uparrow$ & \textbf{AP} $\uparrow$ \\
\midrule
\textbf{GLG (Ours)} & 0.76$\pm$0.13 & 0.93$\pm$0.10 & 0.94$\pm$0.04 \\
\textbf{GRAIN}  & 1.35$\pm$0.03 & 0.91$\pm$0.10 & 0.65$\pm$0.11 \\
\bottomrule
\end{tabular}
\label{baseline_comp_RNMSE_AUC_AP}
\end{table}

\section{Comparison with Other Methods}
\label{sec:comparison}
In this section, we compare our proposed gradient inversion attack, GLG, against several existing methods, including the only other graph-specific gradient inversion approach, GRAIN \citep{grain}, as well as two widely used baselines originally designed for image or tabular data, DLG and TabLeak. We evaluate our approach on the Tox21 dataset, a well-known chemical dataset from the MoleculeNet benchmark \citep{moleculenet}. The task is to reconstruct both the node features $\mathbf{X}$ and the adjacency matrix $\mathbf{A}$ from gradients alone.

Our evaluation utilizes the GSM-$k$ metric as introduced in \citet{grain}, where for each $k$ we aggregate the $k$-hop neighborhood for each node. This is done by randomly initializing a GCN model with $\geq k$ layers. We also include our evaluation metrics on the Tox21 dataset to provide a more detailed comparison.

\subsection{Detailed Analysis}

We evaluate our method under the Graph attacker-GN threat model, which attempts to reconstruct both the node feature matrix $\mathbf{X}$ and the adjacency matrix $\mathbf{A}$ given the gradients. Table \ref{baseline_comp} shows that while reconstruction of $\mathbf{X}$ remains challenging, as indicated by a relatively lower GSM-0 score, reconstruction of the graph structure (reflected in the GSM-1, GSM-2 and FULL scores) is significantly improved compared to other methods. 
Table \ref{baseline_comp_RNMSE_AUC_AP} presents the RNMSE, AUC, and AP results for a more detailed comparison between GLG and GRAIN.

\paragraph{Observations.}
\begin{itemize}[itemsep=0pt,topsep=0pt,parsep=0pt]
    \item \textbf{Node Feature Reconstruction (GSM-0):} The GSM-0 score measures the fidelity of the node feature reconstruction. Our experiments indicate that reconstructing $\mathbf{X}$ from gradients is challenging due to the lack of direct node-level supervision during backpropagation through graph convolutional layers.
    \item \textbf{Subgraph and Full Graph Reconstruction (GSM-1, GSM-2, FULL):} In contrast, the higher GSM-$k$ scores for $k \geq 1$ demonstrate that the graph structure is reconstructed with high accuracy. This suggests that even if individual node features are not perfectly reconstructed, the aggregation of multi-hop neighborhood information helps approximate the overall topology of the graph.
    \item \textbf{Comparison with GRAIN:} Although GRAIN is tailored for graph gradient inversion attacks, our method achieves higher scores in capturing global structure as indicated by improved GSM-1, GSM-2, and FULL metrics. Table \ref{baseline_comp_RNMSE_AUC_AP} shows a similar trend: GLG more effectively reconstructs the graph structure, as indicated by higher AUC and AP, while also slightly improving nodal feature reconstruction, as reflected by the lower RNMSE.
    \item \textbf{Baseline Methods (DLG and TabLeak):} These methods, originally developed for images and tabular data, underperform in the graph domain. Even when augmented with the adjacency matrix information (``\textbf{+A}''), their performance across all GSM metrics remains inferior, highlighting the unique challenges of gradient inversion attacks on graph data.
\end{itemize}

\paragraph{Interpretation.}
The lower GSM-0 score suggests that reconstructing node features $\mathbf{X}$ purely from gradient information is non-trivial. However, the significant improvements observed in GSM-1, GSM-2, and FULL indicate that the model is highly effective in capturing the graph structure. This is critical, as many practical applications depend on the accurate reconstruction of the graph topology rather than on precise node-level details.

\paragraph{Conclusion.}
Overall, our experimental results demonstrate that our method outperforms existing gradient inversion attacks on the Tox21 dataset, particularly in reconstructing the global graph structure. Although node feature reconstruction remains an open challenge, leveraging multi-hop neighborhood information through the GSM-$k$ metric allows for robust approximation of the overall graph, thereby opening new avenues for research in graph-based gradient inversion attacks.

\end{document}